\tikzset{
    -Latex,auto,node distance =1 cm and 1 cm,semithick,
    state/.style ={circle, draw=white, inner sep=0.01cm, minimum width = 0.35 cm},
    snode/.style = {rectangle, draw, inner sep=0.1cm,node contents={}},
    unmeasured/.style = {circle, draw, inner sep=0.05cm,node contents={}},
    point/.style = {circle, draw, inner sep=0.04cm,fill,node contents={}},
    bidirected/.style={Latex-Latex,dashed},
    el/.style = {inner sep=2pt, align=left, sloped}
}
\tikzstyle{mybox} = [draw=gray, fill=gray!20, very thick,
\newcommand{\cmark}{\ding{51}}%
\newcommand{\xmark}{\ding{55}}%
\definecolor{betterred}{RGB}{228,26,28}
\definecolor{betterblue}{RGB}{55,126,184}
\newcommand\labelAndRemember[2]
\gdef\csname labeled:#1\endcsname{#2}%
\newcommand\recallLabel[1]
\endcsname\tag{\ref{#1}}}
\let\oldnl\nl
\newcommand{\nonl}{\renewcommand{\nl}{\let\nl\oldnl}}
\definecolor{opaquegray}{RGB}{192, 192, 192}
\definecolor{afblue}{RGB}{0,0,102}
\definecolor{mypink}{RGB}{255,51,153}
\newtheorem{definition}{Definition}
\newtheorem{theorem}{Theorem}
\newtheorem{assumption}{Assumption}
\newtheorem{lemma}{Lemma}
\newtheorem{proposition}{Proposition}
\newtheorem{corollary}{Corollary}
\title{\Copy{title}{\huge{Debiased Front-Door Learners for Heterogeneous Effects}}}
\author{ \textbf{Yonghan Jung} 
\\ \normalsize{University of Illinois Urbana-Champaign}
\\ \normalsize{ \texttt{yonghan@illinois.edu} } 
}
\date{}
\begin{document}
\maketitle
\allowdisplaybreaks


\begin{abstract}
    \begin{center}
        \textbf{Abstract}
    \end{center}
    \medskip 
     \noindent In observational settings where treatment and outcome share unmeasured confounders but an observed mediator remains unconfounded, the front-door (FD) adjustment identifies causal effects through the mediator. We study the \textit{heterogeneous treatment effect} (HTE) under FD identification and introduce two debiased learners: \textit{FD-DR-Learner} and \textit{FD-R-Learner}. Under explicit sample-splitting, bounded-overlap, moment, and stage-learning assumptions, we show that FD-DR satisfies a product-error bound and FD-R satisfies a stage-error decomposition; these results yield conditional quasi-oracle corollaries when the relevant nuisance remainders are no larger than the target or stage oracle terms. We provide error analyses establishing this debiasedness and demonstrate robust empirical performance in synthetic studies and a real-world case study of primary seat-belt laws using Fatality Analysis Reporting System (FARS) dataset. Together, these results indicate that the proposed learners can deliver reliable and sample-efficient HTE estimates in FD scenarios when the stated assumptions are credible. The implementation is available at \url{https://github.com/yonghanjung/FD-CATE}.

\medskip
\noindent\textbf{Keywords:} Front-door adjustment, Heterogeneous treatment effects, Debiased learning, Conditional quasi-oracle guarantees, DR-Learner, R-Learner

\end{abstract}

\tableofcontents
\newpage

\begingroup
\allowdisplaybreaks
\sloppy
\section{Introduction}

Estimating causal effects from observational data is central to disciplines such as public policy. A crucial challenge is unmeasured confounding, where the treatment a unit receives is influenced by unobserved variables that also affect the outcome. The \textit{front-door} (FD) criterion \citep{pearl:95} addresses this by using an observed mediator that transmits the treatment’s influence to the outcome and is plausibly explained by observed covariates.

A concrete instance (which we also use in our empirical study in \S~\ref{subsec:real-data}) is the effect of adopting a primary seat-belt law ($X$) on occupant fatality ($Y$), with observed seat-belt use ($Z$) as the mediator, and observed covariate $C$. This scenario is depicted in Fig.~\ref{fig:FD}. Because of unmeasured confounding between $X$ and $Y$, the naive contrast $\mathbb{E}[Y \mid X=1] - \mathbb{E}[Y \mid X=0]$ is a biased estimate of the causal effect. When belt use responds strongly to law adoption and the observed covariates plausibly explain belt use, the setting is consistent with an FD structure, enabling identification and estimation of the causal effect via FD adjustment \citep{pearl:95}.

Although robust FD estimation has advanced recently (refer \S~\ref{subsec:related-works}), most methods target population averages (the \emph{average treatment effect}). In practice, platforms require \emph{personalized} effects—i.e., the conditional front-door effect $\tau(C)$—to support decision-making at the user or context level.
We address this gap by adapting heterogeneous treatment effect estimators developed for the standard ignorability \citep{rubin1974Causal} (or back-door adjustment \citep{pearl:95}), such as the DR-Learner \citep{kennedy2020optimal} and the R-learner \citep{nie2021quasi}, to the front-door setting. Concretely,
\begin{enumerate}[leftmargin=*]
    \item \textbf{FD-DR-Learner.} We construct a front-door pseudo-outcome whose conditional mean equals $\tau(C)$ at the true nuisances. With estimated nuisances, its conditional bias is controlled by product-error terms, so the final regression can inherit the target learner's oracle behavior when these remainders are sufficiently small.
    \item \textbf{FD-R-Learner.} We develop a three-stage procedure: (1) estimate how $X$ changes $Z$ across $C$; (2) estimate how changes in $Z$ shift $Y$ given $(X,C)$; (3) compose these pathway estimates to obtain $\tau(C)$. Its guarantee is stated through explicit stage-error bounds for the pathway learners and the final regression.
    \item \textbf{Theory and Practice.} We provide assumption-forward error analyses of the proposed estimators, showing conditional quasi-oracle corollaries when bounded overlap, sample splitting, moment bounds, and stage-learning guarantees hold. We demonstrate our findings with synthetic and real-world data analysis.
\end{enumerate}
This paper is organized as follows. \S\ref{sec:fd-dr-learner}-\ref{sec:fd-r-learner} presents the FD-DR and FD-R learners. \S\ref{sec:experiments} reports simulations and a case study. All proofs are deferred to Section~\ref{sec:proofs}.

\subsection{Related works}\label{subsec:related-works}
For back-door adjustment (g-formula), a large literature establishes debiased and sample-efficient estimation. Classical approaches include augmented inverse probability weighting \citep{robins1994estimation,bang2005doubly}, as well as targeted maximum likelihood estimation (TMLE) \citep{van2006targeted,van2012targeted}. More recently, estimation frameworks leveraging machine learning for heterogeneous treatment effect have produced flexible estimators with finite-sample guarantees, notably the DR-Learner \citep{kennedy2020optimal}, the R-learner \citep{nie2021quasi}, the EP-learner \citep{van2024combining}, orthogonal statistical learning \citep{foster2019orthogonal}, and double/debiased machine learning \citep{chernozhukov2018double}.

Beyond back-door adjustment, research on developing debiased estimators for the front-door (FD) model has grown steadily. \citet{fulcher2017robust} develop a doubly robust estimator for the FD average treatment effect estimator. \citet{guo2023targeted} propose one–step and TMLE estimators for FD adjustment.  \citet{jung2024uca} introduce a unified covariate-adjustment formulation that enables robust and sample-efficient FD estimation. On a different thread, modern deep learning-based FD estimators have been developed for scalable estimation \citep{xu2022neural,xu2025FD}, but these methods lack the debiasedness property.

Beyond average effects, work on \emph{heterogeneous} or \emph{conditional} FD has also emerged. \citet{chen2025conditional} study conditional FD and introduce \emph{LobsterNet}, a multi-task neural estimator for the conditional FD effect. However, their estimator lacks debiasedness. In this work, we propose debiased learners for heterogeneous FD treatment effects, which allow flexible machine learning methods to be used while giving conditional rate guarantees when the relevant product-error and stage-error remainders are controlled. A summary comparing existing works with ours is in Table~\ref{tbl:comparison}.

\begin{figure}[t]
    \hfill
    \begin{minipage}[b]{0.25\textwidth}\centering
        \subfloat[]{
            \begin{tikzpicture}[x=1.2cm,y=.9cm,>={Latex[width=1.4mm,length=1.7mm]},
  font=\sffamily\sansmath\scriptsize,
  RR/.style={draw,circle,inner sep=0mm, minimum size=4.5mm,font=\sffamily\sansmath\footnotesize}]
        \pgfmathsetmacro{\u}{1.0}
        \node[RR, betterblue] (X) at (0*\u,0*\u){$X$};
        \node[RR] (Z) at (1*\u,0*\u){$Z$};
        \node[RR] (C) at (1*\u,1.5*\u){$C$};
        \node[RR, betterred] (Y) at (2*\u,0*\u){$Y$};
        \node[RR, gray!70] (U) at (-0.75*\u, 1.5*\u){$U$};

        \draw[->] (C) to [bend right=0] (X);
        \draw[->] (C) to [bend right=0] (Z);
        \draw[->] (C) to [bend right=0] (Y);
        \draw[->] (X) to [bend right=0] (Z);
        \draw[->] (Z) to [bend right=0] (Y);

        \begin{scope}[gray!70]
            \draw[->] (U) to [bend right=0] (X);
            \draw[->] (U) to [bend right=0] (C);
            \draw[->] (U) to [bend right=-60] (Y);
        \end{scope}
    \end{tikzpicture}\label{fig:FD}}
    \end{minipage}
    \hfill
    \begin{minipage}[b]{0.25\textwidth}\centering
        \subfloat[]{
            \begin{tikzpicture}[x=1.2cm,y=.9cm,>={Latex[width=1.4mm,length=1.7mm]},
  font=\sffamily\sansmath\scriptsize,
  RR/.style={draw,circle,inner sep=0mm, minimum size=4.5mm,font=\sffamily\sansmath\footnotesize}]
        \pgfmathsetmacro{\u}{1.0}
        \node[RR, betterblue] (X) at (0*\u,0*\u){$X$};
        \node[RR] (C) at (1*\u,1.5*\u){$C$};
        \node[RR, betterred] (Y) at (2*\u,0*\u){$Y$};

        \draw[->] (C) to [bend right=0] (X);
        \draw[->] (C) to [bend right=0] (Y);
        \draw[->] (X) to [bend right=0] (Y);

    \end{tikzpicture}\label{fig:BD}}
    \end{minipage}
    \hfill
    \begin{minipage}[t]{0.45\linewidth}\centering
      \small
      \setlength{\tabcolsep}{3pt}
      \vspace{-100pt}
      \subfloat[]{%
        \renewcommand{\arraystretch}{1.05}%
        \begin{tabular}{@{}lccc@{}}
          \toprule
          & FD & HTE & Debiasedness \\
          \midrule
          \makecell[l]{\cite{nie2021quasi}\\ \cite{kennedy2020optimal}}
            & \textcolor{red}{\xmark} & \textcolor{blue}{\cmark} & \textcolor{blue}{\cmark} \\
          \midrule
        \makecell[l]{\cite{xu2022neural}\\ \cite{xu2025FD}}  & \textcolor{blue}{\cmark} & \textcolor{red}{\xmark} & \textcolor{red}{\xmark} \\
            \midrule
          \makecell[l]{\cite{fulcher2017robust}\\ \cite{guo2023targeted} \\ \cite{wen2024causal}   \\ \cite{jung2024uca} }    & \textcolor{blue}{\cmark} & \textcolor{red}{\xmark} & \textcolor{blue}{\cmark} \\
          \midrule
          \makecell[l]{ \cite{chen2025conditional}
          }
            & \textcolor{blue}{\cmark} & \textcolor{blue}{\cmark} & \textcolor{red}{\xmark} \\
          \midrule\midrule
          \textbf{Ours}
            & \textcolor{blue}{\cmark} & \textcolor{blue}{\cmark} & \textcolor{blue}{\cmark} \\
          \bottomrule
        \end{tabular} \label{tbl:comparison}%
      }
    \end{minipage}
    \caption{\textbf{(a)} Causal diagram illustrating the front-door (FD) structure; \textbf{(b)} back-door (BD) structure; \textbf{(c)} comparison table indicating whether methods estimate FD effects, heterogeneous treatment effects (HTE), and are debiased. Our proposed learners satisfy all three.}
\end{figure}
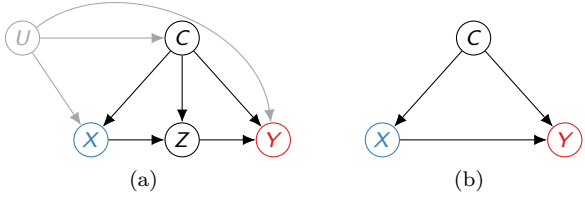

\section{Problem Setup}\label{sec:problem-setup}
We observe i.i.d. samples of $V=(C,X,Z,Y)$, where $C$ is observed covariates, $X \in \{0,1\}$ a binary treatment, $Z \in \{0,1\}$ a binary mediator, and $Y$ is the outcome. Throughout, concatenated arguments denote tuples in the displayed order: for example, $xC=(x,C)$, $zC=(z,C)$, $zxC=(z,x,C)$, $XC=(X,C)$, $ZC=(Z,C)$, $ZXC=(Z,X,C)$, and $YXC=(Y,X,C)$. We also write $e_x(C)\equiv e(x\mid C)$ for $x\in\{0,1\}$, and all sums over $x$ and $z$ are over $\{0,1\}$. Let $U$ represent an unobserved variable that jointly influences $(C,X,Y)$. We define the nuisance functions
\begin{align}
    m(zxc) &\triangleq \mathbb E[Y\mid Z=z, X=x, C=c], \label{eq:nuisance-outcome} \\
    e(x \mid c) &\triangleq \Pr(X=x \mid C=c) \quad \text{ with } e_1(C) \triangleq e(1 \mid C), \text{ and } e_0(C) \triangleq e(0 \mid C). \label{eq:nuisance-propensity} \\
    q(z \mid xc) &\triangleq \Pr(Z=z \mid X=x,C=c) \label{eq:nuisance-mediator-propensity}
\end{align}
We assume positivity in the identification statement: $0 <e(x \mid c), q(z \mid xc) <1$ for $x,z \in \{0,1\}$ and almost every $c$. We also assume $\mathrm{Var}(Y)<\infty$.

The data-generating process is depicted in the causal diagram $\mathcal{G}$ in Fig.~\ref{fig:FD}. Specifically, the structure satisfies the conditional front-door criterion \citep{pearl:95, fulcher2017robust}:
\begin{enumerate}[leftmargin=*]
    \item Every directed path from $X$ to $Y$ is mediated by $Z$ in $\mathcal{G}$ (no unmediated direct effect).
    \item Every spurious path between $X$ and $Z$ is blocked (d-separated) by $C$ in $\mathcal{G}$.
    \item Every spurious path between $Z$ and $Y$ is blocked (d-separated) by $(X,C)$ in $\mathcal{G}$.
\end{enumerate}
Let $\tau_{\bar{x}}(C) \triangleq \mathbb{E}\!\big[Y \mid \mathrm{do}(X=\bar{x}),\,C\big]$, where $\mathrm{do}(X=\bar{x})$ denotes an intervention that fixes $X$ to $\bar{x}$. Under the graph in Fig.~\ref{fig:FD}, the conditional treatment effect $\tau(C)$ is identified by
\begin{align}\hypertarget{eq:target-estimator}{}\label{eq:target-estimator}
    \tau(C) &\triangleq \tau_1(C) - \tau_0(C)  = \sum_{z,x}\{q(z \mid 1C) - q(z \mid 0C)\}e_x(C) m(zxC),
\end{align}

\paragraph{Assumptions for theoretical guarantees.}
The finite-sample and excess-risk statements below use the following strengthened conditions. These assumptions are stated once here and are invoked in Thms.~\ref{thm:error-fd-dr} and~\ref{thm:error-fd-r}.
\begin{enumerate}[label=(A\arabic*),leftmargin=*]
    \item The observed sample is i.i.d.; nuisance and target stages use sample splitting or cross-fitting so that each target regression is evaluated on observations independent of the nuisance fits used in its pseudo-outcome.
    \item The treatment $X$ and mediator $Z$ are binary, and the conditional front-door conditions above hold with the standard consistency and intervention semantics.
    \item There exists $c_0>0$ such that $c_0\le e(x\mid C)\le 1-c_0$ and $c_0\le q(z\mid xC)\le 1-c_0$ almost surely for all $x,z$. Estimated denominators used in inverse weights or density ratios satisfy the same lower bound on the event analyzed.
    \item The conditional means, pseudo-outcomes, and nuisance functions used below have the finite moments required by the displayed $L_2$ and $L_4$ norms; whenever a bound uses a sup-norm factor, that factor is finite on the same event.
    \item The target and stage learners satisfy the excess-risk or stage-error inequalities explicitly assumed in each theorem. These guarantees may come from empirical-process arguments, algorithm-specific oracle inequalities, or the cited R-learner interface.
\end{enumerate}

\subsection{Preliminaries: R-Learner for Back-Door Adjustment (BD-R-Learner)}

In this subsection, we review the standard R-learner of \citep{nie2021quasi}, a key building block for developing our FD-R-Learner. We refer to this standard R-learner as \textit{BD-R-Learner} (shortly, BDR), since it is developed under the back-door (BD) criterion \citep{pearl:95}, as depicted in Fig.~\ref{fig:BD}.

To review the BD-R-Learner, define the nuisance functions $\eta \triangleq \{e_X, m_Y\}$, where
\begin{align}
    e_X(C) \triangleq e(1 \mid C) \quad \text{ and } \quad m_Y(C) \triangleq \mathbb{E}[Y \mid C].
\end{align}
Let $\tau^{\mathrm{BD}}_{\bar{x}}(C) \!\triangleq\! \mathbb{E}[Y \!\mid\! \mathrm{do}(X=\bar{x}),C]$ in the BD setting. It is identified as
\begin{align}
    \tau^{\mathrm{BD}}(C) \triangleq \tau^{\mathrm{BD}}_1(C) - \tau^{\mathrm{BD}}_0(C) \!=\! \mathbb{E}[Y \!\mid\! X=1,C] \!-\! \mathbb{E}[Y \!\mid\! X=0,C].
\end{align}
In the BD graph in Fig.~\ref{fig:BD}, the data generating process can be expressed as the following partial linear model \citep{robinson1988root}:
\begin{align}\label{eq:robinson}
    X &= e_X(C) + \epsilon_X, \quad \mathbb{E}[\epsilon_X \mid C] = 0; \quad \text{ and } \\
    Y &= a(C) + Xb(C) + \epsilon_Y, \quad \mathbb{E}[\epsilon_Y \mid XC] = 0,
\end{align}
where $a(C) = \tau^{\mathrm{BD}}_0(C)$ and $b(C) = \tau^{\mathrm{BD}}(C)$.

The BD-R-learner's loss function, equipped with an arbitrary nuisance $\widetilde{\eta} = \{\widetilde{e}_X, \widetilde{m}_Y\}$ is defined as:
\begin{align}
    \ell^{\mathrm{BDR}}_{\lambda}( (Y,X,C), \widetilde{\eta}, \tau) \triangleq \left(Y - \widetilde{m}_Y(C) - \{X - \widetilde{e}_X(C)\}\tau(C) \right)^2.
\end{align}
Population and empirical risk functions for samples $\mathcal{D}(Y,X,C) \triangleq \{(Y_i,X_i,C_i)\}_{i=1}^{|\mathcal{D}|}$ are give by:
\begin{align}
    L^{\mathrm{BDR}}_{\lambda}(\tau, \widetilde{\eta}) &\triangleq \mathbb{E}_{Y,X,C \sim P}\big[ \ell^{\mathrm{BDR}}_{\lambda}( (Y,X,C), \widetilde{\eta}, \tau)  \big] + \lambda \mathcal{J}(\tau) \label{eq:r-risk}\\
    \widehat{L}^{\mathrm{BDR}}_{\lambda, \mathcal{D}(Y,X,C)}(\tau, \widetilde{\eta}) &\triangleq \frac{1}{\vert \mathcal{D} \vert} \sum_{i: V_i \in \mathcal{D}} \ell^{\mathrm{BDR}}_{\lambda}( (Y_i,X_i,C_i), \widetilde{\eta}, \tau)  + \lambda \mathcal{J}(\tau),\label{eq:empirical-r-risk}
\end{align}
where $\lambda$ is a hyperparameter and $\mathcal{J}$ is a regularizer. BD-R-Learner is defined as follows:
\begin{definition}[\textbf{BD-R-Learner} \citep{nie2021quasi, foster2019orthogonal}]\hypertarget{def:BDR}{}\label{def:BDR}
    \normalfont
    The BD-R-Learner $\widehat{\tau}^{\mathrm{BDR}}$ is learned from the following procedure:
    \begin{enumerate}[leftmargin=*]
        \item Split the i.i.d. dataset $\mathcal{D}(Y,X,C) \triangleq \{(Y_i,X_i,C_i)\}_{i=1}^{2n}$ into $\mathcal{D}_1$ and $\mathcal{D}_2$.
        \item Learn $\widehat{\eta}\triangleq \{\widehat{e}_X, \widehat{m}_Y\}$ using $\mathcal{D}_1$.
        \item Find $\widehat{\tau}^{\mathrm{BDR}} \in \arg\min_{\tau \in \mathcal{T}} \widehat{L}^{BDR}_{\lambda, \mathcal{D}_2(YXC)}(\tau, \widehat{\eta})$.
    \end{enumerate}
\end{definition}
One may repeat steps (2–3) with the partitions swapped and average the two estimates. The debiasedness property of the BD-R-learner is captured by the following cited interface result; we use it as an assumed stage oracle inequality, not as a new empirical-process theorem.
\begin{proposition}[\textbf{Error Analysis of BD-R-Learner} \citep{nie2021quasi,foster2019orthogonal}]\hypertarget{prop:error-BDR}{}\label{prop:error-BDR}
    \normalfont
    Let $\|\cdot\|_p$ denote the $L_p(P)$ norm. Let $\widehat{\tau}^{\mathrm{BDR}}$ denote the BD-R-Learner estimate from Def.~\ref{def:BDR}. Let $a \lesssim b$ denote $a \leq b$ up to a constant factor. Suppose the BD-R target learner satisfies the cited high-probability excess-risk interface: with probability at least $1-\epsilon$,
    \begin{align}
        L^{\mathrm{BDR}}_{\lambda}(\widehat{\tau}^{\mathrm{BDR}}, \widehat{\eta})
        - \min_{\tau \in \mathcal{T}}L^{\mathrm{BDR}}_{\lambda}(\tau, \widehat{\eta})
        \leq \mathcal{R}^{\mathrm{BDR}}_{\mathcal{T}}(\epsilon, \widehat{\tau}^{\mathrm{BDR}}, \widehat{\eta}).
    \end{align}
    Then, on the same event,
    \begin{align}
        \| \widehat{\tau}^{\mathrm{BDR}} \!-\! \tau^{\mathrm{BDR}} \|^2_2 \lesssim \mathcal{R}^{\mathrm{BDR}}_{\mathcal{T}}(\epsilon, \widehat{\tau}^{\mathrm{BDR}}, \widehat{\eta})  + \| \widehat{e}_X - e_X \|^4_4 + \|\hat{m}_Y \!-\! m_Y \|^2_4 \|\widehat{e}_X \!-\! e_X \|^2_4.
    \end{align}
\end{proposition}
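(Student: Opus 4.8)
The plan is to derive the bound from three standard ingredients of orthogonal statistical learning \citep{foster2019orthogonal,nie2021quasi}: \textbf{(i)} at the true nuisances $\eta=\{e_X,m_Y\}$ the population R-risk $L^{\mathrm{BDR}}_{\lambda}(\cdot,\eta)$ is minimized at $\tau^{\mathrm{BD}}$ and grows quadratically in $\|\cdot\|_2$ around it (curvature, from positivity); \textbf{(ii)} the R-loss is Neyman-orthogonal (``debiased'') with respect to $\eta$, so replacing $\eta$ by an estimate $\widehat\eta$ perturbs the excess risk only through terms that are second order in the nuisance errors; and \textbf{(iii)} the hypothesized function $\mathcal{R}^{\mathrm{BDR}}_{\mathcal{T}}$ controls how far $\widehat\tau^{\mathrm{BDR}}$ sits above the $\widehat\eta$-plugged population minimizer. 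Chaining these and applying Young's inequality closes the argument. Throughout I would condition on the nuisance-training fold $\mathcal{D}_1$, so that $\widehat\eta$ is deterministic and independent of both $\mathcal{D}_2$ and a fresh draw $(Y,X,C)\sim P$ --- this is what licenses the conditional-mean-zero cancellations below --- and I assume realizability $\tau^{\mathrm{BD}}\in\mathcal{T}$.

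Write $\delta_m\triangleq\widetilde m_Y-m_Y$, $\delta_e\triangleq\widetilde e_X-e_X$, $h\triangleq\tau-\tau^{\mathrm{BD}}$. From the partial-linear representation \eqref{eq:robinson}, $Y-m_Y(C)=\{X-e_X(C)\}\tau^{\mathrm{BD}}(C)+\epsilon_Y$ with $\mathbb{E}[\epsilon_Y\mid X,C]=0$, the R-residual factors as $Y-\widetilde m_Y(C)-\{X-\widetilde e_X(C)\}\tau(C)=A_{\widetilde\eta}-\{X-\widetilde e_X(C)\}h(C)$, where $A_{\widetilde\eta}\triangleq\epsilon_Y-\delta_m(C)+\delta_e(C)\tau^{\mathrm{BD}}(C)$ is $\tau$-free; squaring and taking expectations,
\begin{align*}
L^{\mathrm{BDR}}_{\lambda}(\tau,\widetilde\eta)-L^{\mathrm{BDR}}_{\lambda}(\tau^{\mathrm{BD}},\widetilde\eta)=\mathbb{E}\big[\{X-\widetilde e_X(C)\}^2 h(C)^2\big]-2\,\mathbb{E}\big[A_{\widetilde\eta}\{X-\widetilde e_X(C)\}h(C)\big]+\lambda\big(\mathcal{J}(\tau)-\mathcal{J}(\tau^{\mathrm{BD}})\big).
\end{align*}
At $\widetilde\eta=\eta$ we have $\delta_m=\delta_e=0$, $A_\eta=\epsilon_Y$, and since $\{X-e_X(C)\}h(C)$ is a function of $(X,C)$ the cross term vanishes, leaving $\mathbb{E}[\mathrm{Var}(X\mid C)h^2]\geq\big(\inf_c\mathrm{Var}(X\mid c)\big)\|h\|_2^2\gtrsim\|h\|_2^2$ (curvature). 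At $\widetilde\eta=\widehat\eta$, substituting $X-\widehat e_X=\{X-e_X\}-\delta_e$ and repeatedly using $\mathbb{E}[X-e_X\mid C]=0$ and $\mathbb{E}[\epsilon_Y\mid X,C]=0$, every term linear in a single nuisance error cancels, leaving $\mathbb{E}[A_{\widehat\eta}\{X-\widehat e_X\}h]=\mathbb{E}[\delta_m\delta_e h]-\mathbb{E}[\delta_e^2\tau^{\mathrm{BD}}h]$ and $\mathbb{E}[\{X-\widehat e_X\}^2 h^2]=\mathbb{E}[\{X-e_X\}^2 h^2]+\mathbb{E}[\delta_e^2 h^2]$. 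With $\delta_m=\widehat m_Y-m_Y$, $\delta_e=\widehat e_X-e_X$, $h=\widehat\tau^{\mathrm{BDR}}-\tau^{\mathrm{BD}}$, and $\rho\triangleq\|\delta_m\|_4\|\delta_e\|_4+\|\delta_e\|_4^2$, Hölder's inequality (using $\tau^{\mathrm{BD}}$ and $\mathcal{T}$ uniformly bounded) then gives
\begin{align*}
\big|\big\{L^{\mathrm{BDR}}_{\lambda}(\widehat\tau^{\mathrm{BDR}},\widehat\eta)-L^{\mathrm{BDR}}_{\lambda}(\tau^{\mathrm{BD}},\widehat\eta)\big\}-\big\{L^{\mathrm{BDR}}_{\lambda}(\widehat\tau^{\mathrm{BDR}},\eta)-L^{\mathrm{BDR}}_{\lambda}(\tau^{\mathrm{BD}},\eta)\big\}\big|\lesssim\rho\,\|h\|_2+\|\delta_e\|_4^2\|h\|_2^2,
\end{align*}
the regularization gap $\lambda(\mathcal{J}(\widehat\tau^{\mathrm{BDR}})-\mathcal{J}(\tau^{\mathrm{BD}}))$ cancelling between the two excess risks.

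For the last step, since $\widehat\tau^{\mathrm{BDR}}\in\arg\min_{\tau\in\mathcal{T}}\widehat L^{\mathrm{BDR}}_{\lambda,\mathcal{D}_2}(\tau,\widehat\eta)$ and $\min_{\tau\in\mathcal{T}}L^{\mathrm{BDR}}_{\lambda}(\tau,\widehat\eta)\leq L^{\mathrm{BDR}}_{\lambda}(\tau^{\mathrm{BD}},\widehat\eta)$, the hypothesis yields $L^{\mathrm{BDR}}_{\lambda}(\widehat\tau^{\mathrm{BDR}},\widehat\eta)-L^{\mathrm{BDR}}_{\lambda}(\tau^{\mathrm{BD}},\widehat\eta)\leq\mathcal{R}^{\mathrm{BDR}}_{\mathcal{T}}(\epsilon,\widehat\tau^{\mathrm{BDR}},\widehat\eta)$ with probability $1-\epsilon$. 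Combining with the previous display (and folding the regularization bias into the oracle term exactly as in the cited works), on that event, for generic positive constants $c,C$,
\begin{align*}
c\,\|h\|_2^2\leq L^{\mathrm{BDR}}_{\lambda}(\widehat\tau^{\mathrm{BDR}},\eta)-L^{\mathrm{BDR}}_{\lambda}(\tau^{\mathrm{BD}},\eta)\leq\mathcal{R}^{\mathrm{BDR}}_{\mathcal{T}}(\epsilon,\widehat\tau^{\mathrm{BDR}},\widehat\eta)+C\rho\,\|h\|_2+C\|\delta_e\|_4^2\|h\|_2^2.
\end{align*}
Young's inequality ($C\rho\|h\|_2\leq\tfrac{c}{4}\|h\|_2^2+\tfrac{C^2}{c}\rho^2$), absorbing $\tfrac{c}{4}\|h\|_2^2$ and --- under the standing boundedness/positivity conditions (or once nuisance errors fall below a fixed constant) --- the term $C\|\delta_e\|_4^2\|h\|_2^2$ into the left side, together with $\rho^2\lesssim\|\delta_m\|_4^2\|\delta_e\|_4^2+\|\delta_e\|_4^4$, then give $\|\widehat\tau^{\mathrm{BDR}}-\tau^{\mathrm{BD}}\|_2^2\lesssim\mathcal{R}^{\mathrm{BDR}}_{\mathcal{T}}(\epsilon,\widehat\tau^{\mathrm{BDR}},\widehat\eta)+\|\widehat m_Y-m_Y\|_4^2\|\widehat e_X-e_X\|_4^2+\|\widehat e_X-e_X\|_4^4$, as claimed.

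The conceptual core --- Neyman orthogonality of the R-loss plus overlap-driven curvature --- is short; I expect the main friction to be bookkeeping: keeping the $\mathcal{D}_1$-conditioning straight so that all single-nuisance cancellations are valid, choosing Hölder exponents that land precisely on the advertised $L_4$ norms and powers, disposing of the quadratic-in-$\|h\|_2$ remainder $\|\delta_e\|_4^2\|h\|_2^2$ (which needs the uniform-boundedness/positivity assumptions, or an asymptotic ``nuisance error eventually small'' regime, to be absorbed into the left-hand side), and the parallel, standard argument that the regularization gap $\lambda\mathcal{J}$ only contributes at the oracle rate $\mathcal{R}^{\mathrm{BDR}}_{\mathcal{T}}$.
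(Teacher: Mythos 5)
Your proposal is correct and follows essentially the same route as the paper's own argument (Theorem~\ref{thm:fast-rate-convergence-1} combined with the R-learner leakage analysis in Theorem~\ref{thm:error-r-learner}): because the R-loss is exactly quadratic in $\tau$, your direct excess-risk expansion is just the exact form of the paper's Taylor/strong-convexity step, and your leakage terms $\mathbb{E}[\delta_m\delta_e h]$ and $\mathbb{E}[\delta_e^2\tau^{\mathrm{BD}}h]$, the Hölder exponents landing on $L_4$ norms, and the Young/absorption endgame coincide with the paper's. One small simplification you could make: the quadratic remainder $\mathbb{E}[\delta_e^2 h^2]$ enters with a favorable sign (it increases the curvature at $\widehat\eta$), so it can be dropped outright rather than absorbed via a boundedness/smallness condition — which also sidesteps the fact that $\mathbb{E}[\delta_e^2h^2]\le\|\delta_e\|_4^2\|h\|_2^2$ is not a literal Hölder bound.
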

This result exhibits the property of debiasedness in the R-learner stage: the leading term is the target excess-risk bound, while the nuisance leakage enters through fourth-order and product terms. Thus an oracle-rate interpretation is valid only when these nuisance remainders are no larger than the stage excess-risk term under the overlap and moment conditions stated above.

\section{FD-DR-Learner}\label{sec:fd-dr-learner}
We first introduce the FD-DR-Learner, a learner for the conditional FD effect $\tau(C)$ based on a front-door pseudo-outcome. The key property is conditional: at the true nuisances, the pseudo-outcome has conditional mean $\tau(C)$; with estimated nuisances, the remaining bias is a conditional nuisance-bias term. The resulting guarantee is therefore an excess-risk bound plus a product-error remainder, rather than an unconditional oracle statement.

In addition to the nuisance functions $m(ZXC)$, $e(X \mid C)$ and $q(Z\mid XC)$ in Eq.~\eqref{eq:nuisance-outcome}, \eqref{eq:nuisance-propensity}, \eqref{eq:nuisance-mediator-propensity}, define the following weights for $\bar{x}\in\{0,1\}$:
\begin{align}
    \xi_{\bar{x}}(ZXC) \triangleq \frac{q(Z \mid \bar{x}C)}{q(Z \mid XC)},  \quad \pi_{\bar{x}}(XC) &\triangleq  \frac{\mathbb{I}(X=\bar{x})}{e(\bar{x} \mid C)}.
\end{align}
We also define the following induced functionals of $(m,e,q)$:
\begin{align}
    r_{me}(ZC) &\triangleq \sum_{x \in \{0,1\}}m(ZxC)e(x\mid C),\\
    \nu_{meq}(XC) &\triangleq \sum_{z \in \{0,1\}} r_{me}(zC) q(z \mid XC), \\
    s_{mq_{\bar{x}}}(XC) &\triangleq \sum_{z \in \{0,1\}} m(zXC)q(z \mid \bar{x}C).
\end{align}
Let $\eta_0\triangleq\{m,e,q\}$ denote the true front-door nuisance set. For a generic nuisance value $\widetilde\eta=\{\widetilde m,\widetilde e,\widetilde q\}$, the notation $\varphi_{\bar x}(V;\widetilde\eta)$ means that the weights and induced functionals in the display below are evaluated with $(m,e,q)$ replaced by $(\widetilde m,\widetilde e,\widetilde q)$.

\begin{definition}[\textbf{Front-door Pseudo-Outcome} (FDPO)]\label{def:fdpo}
    \normalfont
    For $\bar{x}\in\{0,1\}$ and nuisance set $\eta=\{m,e,q\}$, the \emph{front-door pseudo-outcome} at $\bar{x}$ is
    \begin{align}
        \varphi_{\bar{x}}(V;\eta)
        \triangleq
        \xi_{\bar{x}}(ZXC)\{Y \!-\! m(ZXC)\}
        + \pi_{\bar{x}}(XC)\{r_{me}(ZC) \!-\!\nu_{meq}(XC)\}
        + s_{mq_{\bar{x}}}(XC).
    \end{align}
\end{definition}
For any nuisance value $\eta$, write the difference FDPO as
\begin{align}
    D_{\eta}(V)\triangleq \varphi_{1}(V;\eta)-\varphi_{0}(V;\eta).
\end{align}

\begin{lemma}[\textbf{FDPO Conditional Mean}]\label{lemma:fdpo-analysis}
    \normalfont
    Under the assumptions in \S~\ref{sec:problem-setup}, for $\bar{x}\in\{0,1\}$,
    \begin{align}
        \mathbb{E}\!\big[\varphi_{\bar{x}}(V;\eta_0)\mid C\big]=\tau_{\bar{x}}(C).
    \end{align}
\end{lemma}

\begin{lemma}[\textbf{FDPO Conditional Bias Target}]\label{lemma:fdpo-bias}
    \normalfont
    Let $\widehat{\eta}$ be any nuisance estimate satisfying the denominator and moment requirements in \S~\ref{sec:problem-setup}. Define
    \begin{align}
        B_{\bar{x}}(C)
        \triangleq
        \mathbb{E}\!\big[\varphi_{\bar{x}}(V;\widehat{\eta})\mid C\big]-\tau_{\bar{x}}(C).
    \end{align}
    Equivalently, for $j\in\{0,1\}$,
    \begin{align}
        B_j(C)
        \triangleq
        \mathbb{E}\!\big[\varphi_j(V;\widehat{\eta})\mid C\big]-\tau_j(C).
    \end{align}
    With the difference FDPO $D_{\widehat{\eta}}$ above, we have
    \begin{align}
        \mathbb{E}\!\big[D_{\widehat{\eta}}(V)\mid C\big]-\tau(C)=B_{1}(C)-B_{0}(C).
    \end{align}
\end{lemma}

We learn $\tau(C)$ by regressing the FDPO difference on $C$. Specifically, define the population and empirical risk functions
\begin{align}\label{eq:risk-function-dr}
    \mathcal{L}^{\mathrm{DR}}_{\lambda}(\tilde{\tau},\tilde{\eta})
    &\triangleq
    \mathbb{E}\!\big[\big\{D_{\tilde{\eta}}(V)-\tilde{\tau}(C)\big\}^{2}\big]
    + \lambda\,\mathcal{J}(\tilde{\tau}),\\[2mm]
\label{eq:empirical-risk-function-dr}
    \widehat{\mathcal{L}}^{\mathrm{DR}}_{\lambda,\mathcal{D}}(\tilde{\tau},\tilde{\eta})
    &\triangleq
    \frac{1}{|\mathcal{D}|}\sum_{i:\,V_i\in\mathcal{D}}
    \big(\big\{D_{\tilde{\eta}}(V_i)-\tilde{\tau}(C_i)\big\}^{2}
    + \lambda\,\mathcal{J}(\tilde{\tau})\big).
\end{align}

\begin{definition}[\textbf{FD-DR-Learner}]\hypertarget{def:fd-dr-learner}{}\label{def:fd-dr-learner}
    \normalfont
    Let $\mathcal{D}_1,\mathcal{D}_2$ be two disjoint splits of i.i.d. observations $V_i=(C_i,X_i,Z_i,Y_i)$. Let $\mathcal{T}$ be the model class for $\tau$, and let $\lambda_n$ be a regularization level.
    \begin{enumerate}[leftmargin=*]
        \item Fit $\hat{\eta}\triangleq\{\hat{m},\hat{q}, \hat{e}\}$ on $\mathcal{D}_1$.
        \item Compute $\displaystyle \hat{\tau}_{\mathrm{DR}}\in\arg\min_{\tilde{\tau}\in\mathcal{T}}
\widehat{\mathcal{L}}^{\mathrm{DR}}_{\lambda_n,\mathcal{D}_2}(\tilde{\tau},\hat{\eta})$ using $\mathcal{D}_2$.
        \item Optionally swap the roles of $\mathcal{D}_1$ and $\mathcal{D}_2$ and average the two estimates.
    \end{enumerate}
\end{definition}

\begin{theorem}[\textbf{Error Analysis of FD-DR-Learner}]\label{thm:error-fd-dr}
    \normalfont
    Let
    \begin{align}
        \mu_{\widehat{\eta}}(C)
        &\triangleq\mathbb{E}\!\big[D_{\widehat{\eta}}(V)\mid C\big],\\
        B_j(C)
        &\triangleq
        \mathbb{E}\!\big[\varphi_j(V;\widehat{\eta})\mid C\big]-\tau_j(C),
        \qquad j\in\{0,1\},\\
        R_{\mathrm{DR}}
        &\triangleq
        \|\widehat{\tau}_{\mathrm{DR}}-\mu_{\widehat{\eta}}\|_2^2.
    \end{align}
    Here $B_j$ is the conditional FDPO bias at intervention level $j$, and $R_{\mathrm{DR}}$ is the final-stage target-regression error relative to the conditional mean of the estimated FDPO difference. Then, on the event under analysis,
    \begin{align}
        \| \widehat{\tau}_{\mathrm{DR}}-\tau \|_2^2
        \le
        2R_{\mathrm{DR}}+2\|B_1-B_0\|_2^2.
    \end{align}
\end{theorem}

\begin{corollary}[\textbf{Conditional Quasi-Oracle Statement for FD-DR}]\label{cor:fd-dr-quasi-oracle}
    \normalfont
    Let $r_{\mathrm{DR},n}$ be a rate such that $R_{\mathrm{DR}}=O_p(r_{\mathrm{DR},n})$. If $\|B_1-B_0\|_2^2=o_p(r_{\mathrm{DR},n})$, then the FD-DR learner has the same first-order rate as the final target regression oracle. If the FDPO conditional bias is second order in nuisance errors and all nuisance factors entering those products are $O_p(n^{-1/4})$, then $\|B_1-B_0\|_2^2=O_p(n^{-1})$. Consequently, a quasi-oracle interpretation is justified only when this second-order remainder is no larger than the target oracle rate under the assumptions in \S~\ref{sec:problem-setup}.
\end{corollary}

In words, FD-DR is debiased because the nuisance contribution appears through the conditional bias term $B_1-B_0$. We use ``double robustness'' in this paper in this product-error sense: the bound improves when the relevant nuisance products are small. It should not be read as a claim that arbitrary misspecification of one nuisance block is harmless without the stated overlap, moment, and conditional-bias assumptions.

\section{FD-R-Learner}\label{sec:fd-r-learner}
In this section, we introduce the FD-R-Learner, a learner for the conditional FD effect $\tau(C)$ based on a pathway decomposition. The guarantee is stated through explicit stage-error bounds: one stage estimates the effect of $X$ on $Z$, another estimates the effect of $Z$ on $Y$ conditional on $(X,C)$, and a final regression estimates the conditional average of the second pathway component.
Let $\mathcal{B},\mathcal{Q},\Gamma$ denote the function classes used to learn $b$, $g$, and $\gamma$, respectively.

\begin{proposition}[\textbf{Partial Linear Representation for FD Stages}]\label{prop:ple-FD}
    \normalfont
    Under the conditional front-door assumptions in \S~\ref{sec:problem-setup}, define
    \begin{align*}
        e_X(C)&\triangleq \mathbb{E}[X\mid C],\\
        a(C)&\triangleq \mathbb{E}[Z\mid X=0,C],\\
        b(C)&\triangleq \mathbb{E}[Z\mid X=1,C]-\mathbb{E}[Z\mid X=0,C],\\
        f(XC)&\triangleq \mathbb{E}[Y\mid Z=0,X,C],\\
        g(XC)&\triangleq \mathbb{E}[Y\mid Z=1,X,C]-\mathbb{E}[Y\mid Z=0,X,C].
    \end{align*}
    Then the observed conditional means admit the residual representations
    \begin{align}
        X &= e_X(C) + \epsilon_X, \quad \mathbb{E}[\epsilon_X \mid C] = 0, \\
        Z &= a(C) + X b(C) + \epsilon_Z, \quad \mathbb{E}[\epsilon_Z \mid XC] = 0, \label{eq:robinson-z} \\
        Y &= f(XC) + Z g(XC) + \epsilon_Y, \quad \mathbb{E}[\epsilon_Y \mid ZXC] = 0.
    \end{align}
\end{proposition}

Since $C$ satisfies the back-door criterion relative to $(X,Z)$, the pathway component $b(C)$ can be learned using the \hyperlink{def:BDR}{BD-R-Learner}:
\begin{align}\label{eq:r-learner-b}
    \widehat{b} &\in \arg\min_{b \in \mathcal{B}}\widehat{L}^{\mathrm{BDR}}_{\lambda_b, \mathcal{D}_{2}(Z,X,C)}(b, \widehat{\eta}_b \triangleq \{\widehat{e}_X, \widehat{m}_Z\}),
\end{align}
where $\widehat{m}_Z$ estimates $m_Z(C)\triangleq \mathbb{E}[Z\mid C]$. The corresponding stage guarantee must be checked or assumed explicitly; in the notation below it is expressed through the oracle stage error $R_b$ and nuisance remainder $N_b$.

Similarly, since $(X,C)$ satisfies the back-door criterion relative to $(Z,Y)$, the pathway component $g(XC)$ can be learned through an R-learner stage:
\begin{align}\label{eq:r-learner-g}
    \widehat{g} &\in \arg\min_{g \in \mathcal{Q}}\widehat{L}^{\mathrm{BDR}}_{\lambda_g, \mathcal{D}_{2}(Y,Z,XC)}(g, \widehat{\eta}_g \triangleq \{\widehat{e}_Z, \widehat{m}_Y\}),
\end{align}
where $\widehat{e}_Z$ estimates $e_Z(XC)\triangleq \mathbb{E}[Z\mid X,C]$ and $\widehat{m}_Y$ estimates $m_Y(XC)\triangleq \mathbb{E}[Y\mid X,C]$. Its BD-R contribution is carried forward through the oracle error $R_g$ and nuisance remainder $N_g$ in the final theorem; the final pseudo-$g$ regression has its own terms $R_\gamma$ and $N_\gamma$.

\begin{theorem}[\textbf{Heterogeneous Treatment Effect via Pathway Decomposition}]\label{thm:fd-hte-ple}
    \normalfont
    Let $b$ and $g$ denote the functionals in Prop.~\ref{prop:ple-FD}. Define
    \begin{align}
        \gamma_g(C) \triangleq \mathbb{E}[g(XC) \mid C]
        = e_X(C)g(1C)+\{1-e_X(C)\}g(0C).
    \end{align}
    Then,
    \begin{align}
        \tau(C) = b(C) \gamma_g(C).
    \end{align}
\end{theorem}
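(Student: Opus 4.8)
The plan is to collapse the closed-form identification of $\tau(C)$ in Eq.~\eqref{eq:target-estimator} onto the observational representations of $b$ and $g$ that the front-door criterion already supplies, after which the claim reduces to a one-line telescoping of the sum over the binary mediator $Z$.

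First I would put $b$ and $g$ in observational form. Because $Z$ is binary and $C$ d-separates every spurious $X$--$Z$ path (clause~2 of the conditional FD criterion in \S\ref{sec:problem-setup}), the second rule of do-calculus gives $\mathbb{E}[Z\mid\mathrm{do}(X=\bar x),C]=\mathbb{E}[Z\mid X=\bar x,C]=q(1\mid\bar x C)$ — the same identity invoked just below Eq.~\eqref{eq:robinson-z} — so $b(C)=q(1\mid 1C)-q(1\mid 0C)$. Likewise $(X,C)$ d-separates every spurious $Z$--$Y$ path (clause~3), hence $\mathbb{E}[Y\mid\mathrm{do}(Z=z),XC]=\mathbb{E}[Y\mid Z=z,X,C]=m(zXC)$ and $g(XC)=m(1XC)-m(0XC)$; averaging over $X\mid C$ gives $\gamma_g(C)=\sum_{x\in\{0,1\}}e_x(C)\{m(1xC)-m(0xC)\}$.

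Next I would split the sum over $z\in\{0,1\}$ in Eq.~\eqref{eq:target-estimator}. Using $q(0\mid\bar x C)=1-q(1\mid\bar x C)$, the coefficient of $m(0xC)$ equals $q(0\mid 1C)-q(0\mid 0C)=-b(C)$ and the coefficient of $m(1xC)$ equals $q(1\mid 1C)-q(1\mid 0C)=b(C)$, so
\begin{align*}
\tau(C)
&=\sum_{x\in\{0,1\}}e_x(C)\Big[-b(C)\,m(0xC)+b(C)\,m(1xC)\Big]\\
&=b(C)\sum_{x\in\{0,1\}}e_x(C)\{m(1xC)-m(0xC)\}=b(C)\,\gamma_g(C),
\end{align*}
the last step using the expression for $\gamma_g(C)$ from the previous paragraph. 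This is the asserted identity.

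I do not expect a genuine obstacle: the only non-algebraic content is the pair of do-calculus reductions $\mathbb{E}[Z\mid\mathrm{do}(X=\bar x),C]=\mathbb{E}[Z\mid X=\bar x,C]$ and $\mathbb{E}[Y\mid\mathrm{do}(Z=z),XC]=\mathbb{E}[Y\mid Z=z,XC]$, and both are immediate from the d-separation clauses defining the conditional FD criterion — indeed they are exactly the reductions already used for Prop.~\ref{prop:ple-FD} and for the BD-R-Learner recipe of $\widehat g$ near Eq.~\eqref{eq:r-learner-g}. Should one wish to avoid re-invoking those facts, an alternative purely structural route is to substitute the definitions of $a,b,f,g$ into the partial linear system of Prop.~\ref{prop:ple-FD} and propagate $\mathbb{E}[\,\cdot\mid\mathrm{do}(X=\bar x),C]$ first through the $Z$-equation (using $\mathbb{E}[\epsilon_Z\mid XC]=0$ and the absence of a $U\!\to\!Z$ edge) and then through the $Y$-equation (using $\mathbb{E}[\epsilon_Y\mid ZXC]=0$), which yields $\tau_{\bar x}(C)=\mathbb{E}[f(XC)\mid C]+\{a(C)+\bar x\,b(C)\}\gamma_g(C)$ and hence the claim upon differencing $\bar x=1$ against $\bar x=0$.
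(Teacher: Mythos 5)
Your proof is correct and essentially mirrors the paper's argument: the paper likewise starts from the front-door identification of $\tau_{\bar x}(C)$, uses $m(ZXC)=f(XC)+Zg(XC)$ and $\mathbb{E}[Z\mid \bar x,C]=a(C)+\bar x\,b(C)$ (the same observational reductions you invoke via do-calculus), and differences over $\bar x$, which is the same binary-mediator algebra as your coefficient telescoping over $z$. No gaps; the two write-ups differ only in the order of the algebraic steps.
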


A remaining challenge is to estimate $\gamma_g(C)$ without inheriting a first-order error from $\widehat e_X$. A plug-in estimator,
\begin{align}
    \widehat{\gamma}^{\mathrm{plug}}(C) \triangleq  \widehat{e}_X(C)\widehat{g}(1C) + \{1-\widehat{e}_X(C)\}\widehat{g}(0C),
\end{align}
contains the term $\{g(1C)-g(0C)\}\{\widehat e_X(C)-e_X(C)\}$ in its first-order expansion. We instead use the following pseudo-$g$ target.

\begin{definition}[\textbf{Pseudo-$g$}]\hypertarget{def:pseudo-g}{}\label{def:pseudo-g}
    \normalfont
    For $\widetilde{\eta}_{z} \triangleq \{\tilde{e}_X(C), \tilde{g}(XC) \}$, define
    \begin{align}
        \zeta_{\widetilde{\eta}_z}(XC) \triangleq
        \{1 \!-\! \widetilde{e}_X(C)\}\widetilde{g}(0C)
        + \widetilde{e}_X(C)\widetilde{g}(1C)
        + \{X \!-\! \widetilde{e}_X(C)\}\{ \widetilde{g}(1C) \!-\! \widetilde{g}(0C) \}.
    \end{align}
\end{definition}

\begin{lemma}[\textbf{Property of pseudo-$g$}]\label{lemma:property-pseudo-g}
    \normalfont
    For any estimated $\widehat{e}_X$ and $\widehat{g}$ satisfying the moment conditions in \S~\ref{sec:problem-setup}, set $\widehat{\eta}_{z}\triangleq\{\widehat e_X,\widehat g\}$. Then
    \begin{align}
        \mathbb{E}[\zeta_{\widehat{\eta}_{z}}(XC)\mid C]-\gamma_g(C)
        =
        e_X(C) \{\widehat{g}(1C)-g(1C)\}
        +\{1-e_X(C)\}\{\widehat{g}(0C)-g(0C)\}.
    \end{align}
    In particular, the conditional bias contains no first-order term in $\widehat e_X-e_X$.
\end{lemma}

We now formally define the FD-R-Learner.
\begin{definition}[\textbf{FD-R-Learner}]\hypertarget{def:fd-r-learner}{}\label{def:fd-r-learner}
    \normalfont
    Let $(\mathcal{D}_1, \mathcal{D}_2, \mathcal{D}_3)$ be disjoint splits of i.i.d. observations $V_i = (C_i,X_i,Z_i,Y_i)$.
    \begin{enumerate}[leftmargin=*]
        \item Fit $\widehat{\eta}_{b} \triangleq \{\widehat{e}_X, \widehat{m}_Z\}$ and $\widehat{\eta}_g \triangleq \{\widehat{e}_Z, \widehat{m}_Y\}$ using $\mathcal{D}_1$.
        \item Learn $\widehat{b}$ using $\mathcal{D}_{2}$ with $\widehat{\eta}_{b}$ from Eq.~\eqref{eq:r-learner-b}.
        \item Learn $\widehat{g}$ using $\mathcal{D}_{2}$ with $\widehat{\eta}_{g}$ from Eq.~\eqref{eq:r-learner-g}.
        \item Set $\widehat{\eta}_{z}\triangleq\{\widehat e_X,\widehat g\}$ and evaluate $\zeta_{\widehat{\eta}_{z}}(XC)$ on $\mathcal{D}_3$.
        \item Find
        \begin{align}
            \widehat{\gamma} \in \arg\min_{\gamma \in \Gamma} \frac{1}{|\mathcal{D}_3|} \sum_{i:V_i \in \mathcal{D}_3} \{\zeta_{\widehat{\eta}_{z}}(X_iC_i) - \gamma(C_i)\}^2 + \lambda_{\gamma,n}\mathcal{J}(\gamma).
        \end{align}
        \item Return $\widehat{\tau}_{\mathrm{R}}(C) \triangleq \widehat{b}(C) \widehat{\gamma}(C)$.
    \end{enumerate}
\end{definition}
One may repeat the procedure with swapped splits and average the estimates.

The next result is a stage-error decomposition. It is not a fast-rate theorem by itself: it becomes a rate theorem only after one supplies separate rate bounds for the stage oracle errors and nuisance remainders.
\begin{theorem}[\textbf{Stage-Error Decomposition for FD-R-Learner}]\label{thm:error-fd-r}
    \normalfont
    Let $a\lesssim b$ denote $a\le Kb$ for a universal constant $K$ independent of the sample size.
    For any stage function $h(X,C)$, write
    \begin{align}
        \gamma_h(C)\triangleq \mathbb{E}[h(X,C)\mid C].
    \end{align}
    Let $\bar b$ and $\bar g$ denote the oracle versions of the $b$- and $g$-stage learners obtained by running the same learning rules with the true nuisance functions in place of $\widehat\eta_b$ and $\widehat\eta_g$. Let $\eta_z^\star\triangleq\{e_X,\bar g\}$ and write $\zeta_{(e_X,\bar g)}\equiv\zeta_{\eta_z^\star}$. Let $\bar\gamma$ denote the oracle final-stage learner obtained by running the same $\gamma$-regression with the oracle pseudo-outcome $\zeta_{(e_X,\bar g)}(X,C)$, so its regression target is $\gamma_{\bar g}$.
    Define the oracle stage errors
    \begin{align}
        R_b&\triangleq \|\bar b-b\|_{2,C}^2,&
        R_g&\triangleq \|\bar g-g\|_{2,XC}^2,&
        R_\gamma&\triangleq \|\bar\gamma-\gamma_{\bar g}\|_{2,C}^2,
    \end{align}
    and the nuisance remainders
    \begin{align}
        N_b&\triangleq \|\widehat b-\bar b\|_{2,C}^2,\\
        N_g&\triangleq \|\widehat g-\bar g\|_{2,XC}^2,\\
        N_\gamma&\triangleq \|\widehat\gamma-\bar\gamma\|_{2,C}^2.
    \end{align}
    Here $\|\cdot\|_{2,C}$ and $\|\cdot\|_{2,XC}$ denote $L_2$ norms under the marginal laws of $C$ and $(X,C)$, respectively, conditional on the training splits.
    The pairs $R_b+N_b$ and $R_g+N_g$ are exactly the BD-R oracle-plus-nuisance decompositions for the $X\!\to\!Z$ and $Z\!\to\!Y$ back-door subproblems. The term $R_\gamma+N_\gamma$ belongs to the final pseudo-$g$ regression interface.

    Suppose also that $\|\widehat b\|_\infty\le M_b$ and $\|\gamma_g\|_\infty\le M_\gamma$. Then
    \begin{align}
        \|\widehat\tau_{\mathrm{R}}-\tau\|_{2,C}^2
        \le
        2M_b^2\|\widehat\gamma-\gamma_g\|_{2,C}^2
        +2M_\gamma^2\|\widehat b-b\|_{2,C}^2.
    \end{align}
    Moreover, the definitions above imply the stage-interface bounds
    \begin{align}
        \|\widehat b-b\|_{2,C}^2 &\lesssim R_b+N_b,\\
        \|\widehat g-g\|_{2,XC}^2 &\lesssim R_g+N_g,\\
        \|\widehat\gamma-\gamma_g\|_{2,C}^2 &\lesssim R_\gamma+R_g+N_\gamma,
    \end{align}
    and therefore
    \begin{align}
        \|\widehat\tau_{\mathrm{R}}-\tau\|_{2,C}^2
        \lesssim R_b+R_g+R_\gamma+N_b+N_g+N_\gamma.
    \end{align}
    Thus the theorem decomposes the error into named stage terms; it does not assert any particular convergence speed for those terms.
\end{theorem}

\begin{corollary}[\textbf{Conditional Quasi-Oracle Statement for FD-R}]\label{cor:fd-r-quasi-oracle}
    \normalfont
    If $R_b+R_g+R_\gamma=O_p(r_n)$ and $N_b+N_g+N_\gamma=o_p(r_n)$, then FD-R has the same first-order rate as the oracle stage terms. If the BD-R nuisance leakages $N_b$ and $N_g$ are fourth-order or second-order product terms, all nuisance factors entering those products are $O_p(n^{-1/4})$, and the final pseudo-$g$ regression leakage $N_\gamma$ is also lower order than $r_n$, then FD-R has a quasi-oracle or fast-rate interpretation. This interpretation requires explicit stage-rate guarantees and the condition that the BD-R and final-regression nuisance remainders do not dominate the stage oracle rates.
\end{corollary}

\subsection{Comparison between FD-DR-Learner and FD-R-Learner}\label{subsec:comparison}

\textbf{FD-DR-Learner.} FD-DR has product-error debiasedness (Thm.~\ref{thm:error-fd-dr}): its error is controlled by the target regression term plus the conditional FDPO bias $\|B_1-B_0\|_2^2$. This makes FD-DR attractive when the conditional bias products are small and bounded overlap is credible. A caveat is variance under near-violations of overlap: its construction uses inverse weights and density ratios (e.g., $\pi_{\bar x}(XC)$ and $\xi_{\bar x}(ZXC)$), which can inflate finite-sample variance when $e$ or $q$ approach $0$ or $1$.

\textbf{FD-R-Learner.} FD-R avoids the density ratios required by FD-DR, making it more variance-friendly in finite samples when overlap is weak. This does not remove the formal bounded-overlap requirement in the theorem, but it can make FD-R a practical choice when inverse-weight diagnostics are unstable. FD-R also decomposes estimation into two BD-R-style pathway components, $b(C)$ for $X\!\to\!Z$ and $g(XC)$ for $Z\!\to\!Y$, plus the final $\gamma_g(C)$ regression. These intermediates are useful for diagnostics, at the cost of additional nuisance and stage fits.

\textbf{Practitioner Guidance.} Use FD-DR when FDPO products are controlled and overlap is stable; use FD-R when avoiding density ratios or inspecting pathways matters, subject to the stated assumptions.

\begin{figure*}[t]
    \centering
    \adjincludegraphics[scale=0.20,trim={{.00\width} {.16\width} {.0\width} {.15\width}},clip]{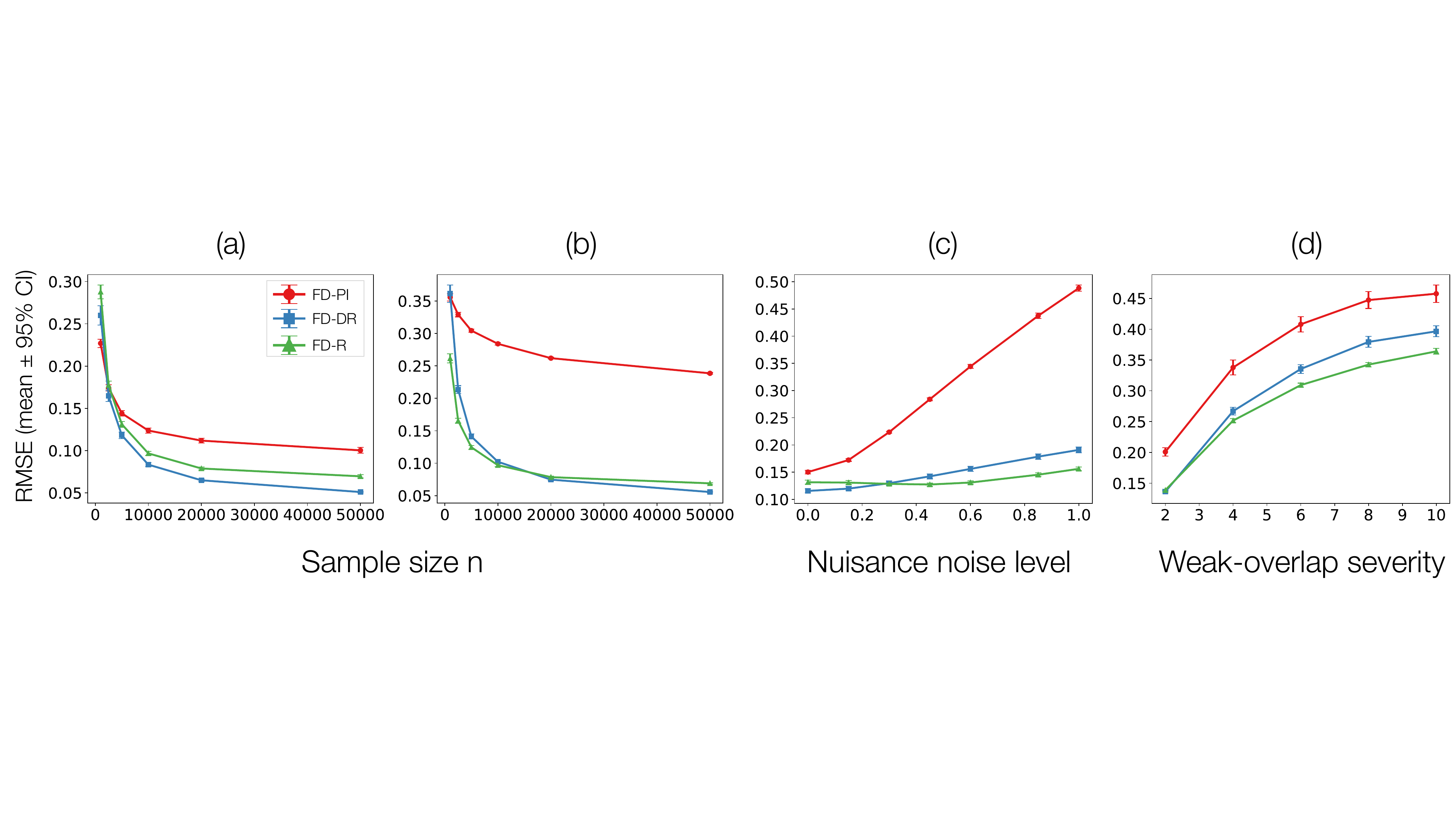}
    \caption{Synthetic study: RMSE (mean ± 95\% CI) of FD-PI (plug-in), FD-DR, and FD-R estimators: \textbf{(a-b)} varying sample size $n$, where (a) no structural nuisance noise is added, and (b) simulation-level nuisance perturbations are set at the $n^{-1/4}$ scale; \textbf{(c)} varying the level $\rho$ of nuisance noises $\rho\epsilon, \epsilon\sim\mathcal{N}(n^{-1/4}, n^{-1/4})$; and \textbf{(d)} varying weak-overlap severity.}
    \label{fig:synthetic}
\end{figure*}

\section{Experiments}\label{sec:experiments}
In this section, we demonstrate the debiasedness of the proposed estimators. In all experiments, nuisance functions are learned with XGBoost \citep{chen2016xgboost}. We compare the proposed FD-DR and FD-R-learners with a plug-in (PI) estimator $\widehat{\tau}_{\mathrm{PI}}$ of the \hyperlink{eq:target-estimator}{target estimand}:
\begin{align}
    \widehat{\tau}_{\mathrm{PI}}(C) \triangleq \sum_{zx}\{\widehat{q}(z \mid 1C) - \widehat{q}(z \mid 0C)\}\widehat{e}(X=x \mid C)\widehat{m}(zxC).
\end{align}
Details of simulations are described in Sec.~\ref{sec:sim-details}. The implementation is available at \url{https://github.com/yonghanjung/FD-CATE}.

\subsection{Synthetic Data Analysis}\label{subsec:synthetic-data-analysis}
We assess the proposed estimators on synthetic scenarios where the true heterogeneous FD effect $\tau(C)$ is known. Fig.~\ref{fig:synthetic} reports the root mean squared error (RMSE; mean $\pm$ 95\% CI) of the plug-in baseline (FD-PI, $\widehat{\tau}_{\mathrm{PI}}$), the FD-DR learner, and the FD-R learner across four regimes.

Panels~(a-b) vary the sample size $n$. In these simulations, FD-DR and FD-R have lower RMSE than FD-PI as $n$ grows. When no additional structural noise is introduced (panel~(a)), all estimators converge as expected, but the proposed FD-DR and FD-R learners achieve substantially lower error. When the simulation imposes nuisance perturbations at the $n^{-1/4}$ scale (panel~(b)), FD-DR and FD-R still exhibit reliable empirical convergence, consistent with the product-error and stage-error robustness predicted by the theory under its stated assumptions, whereas the plug-in estimator converges much more slowly.

To further probe robustness under noisy nuisances, we inject controlled simulation noise of the form $\rho \epsilon$ into the nuisance functionals, where $\epsilon \sim \mathcal{N}(n^{-1/4}, n^{-1/4})$ and $\rho \in \{0.0,0.2,0.4,0.6,0.8,1.0\}$. Panel~(c) shows that the RMSE of $\widehat{\tau}_{\mathrm{PI}}$ deteriorates rapidly as $\rho$ increases, while FD-DR and FD-R maintain stable performance. FD-R achieves lower RMSE across higher noise levels, reflecting its reduced reliance on inverse-weighted nuisance ratios in this simulation.

Finally, panel~(d) examines weak-overlap scenarios by pushing $e(X=1 \mid C)$ and $q(Z=1 \mid X,C)$ toward zero. In this regime, the plug-in estimator exhibits severe degradation, and FD-DR suffers noticeable variance inflation due to its use of inverse weights. FD-R, by contrast, remains stable and outperforms both FD-PI and FD-DR in this simulation. These patterns mirror our analytical comparison in \S~\ref{subsec:comparison}, and are consistent with FD-DR performing best when nuisances are accurately estimated and overlap is sufficient, while FD-R is particularly advantageous under weak overlap or noisy nuisance models.

\subsection{Case Study: State Seat-Belt Laws and Fatalities (FARS)}\label{subsec:real-data}

\begin{figure*}[t]
    \centering
    \adjincludegraphics[scale=0.20,trim={{.00\width} {.16\width} {.0\width} {.10\width}},clip]{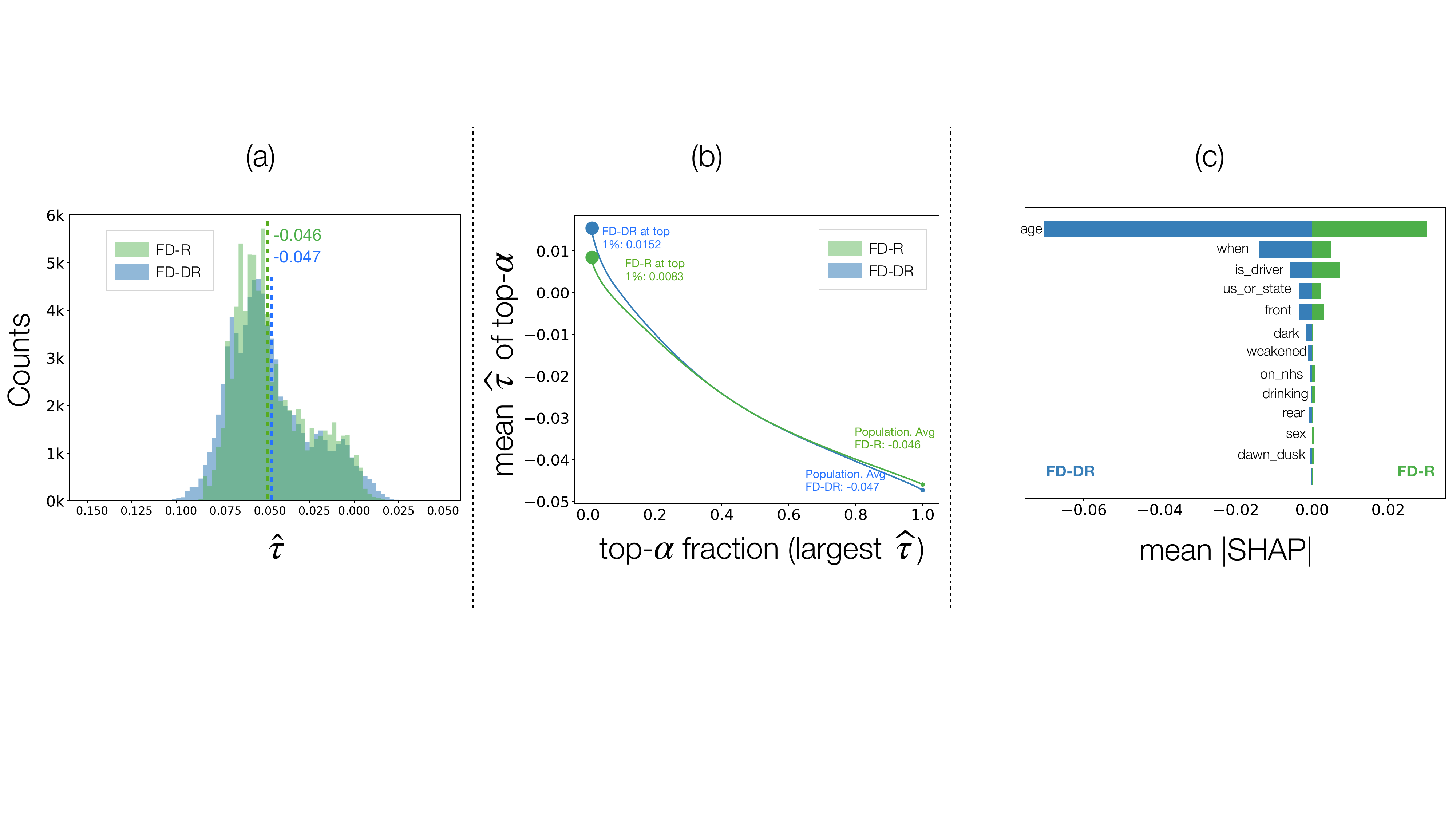}
    \caption{Seat-Belt Laws and Fatalities: \textbf{(a)} histograms for $\widehat{\tau}_{\mathrm{DR}}(C_i)$ and $\widehat{\tau}_{\mathrm{R}}(C_i)$ \textbf{(b)} concentration curve showing the mean values of $\widehat\tau_{\mathrm{DR}}$ and $\widehat\tau_{\mathrm{R}}$ within the top-$\alpha$ fraction (largest $\widehat\tau$), where both learners exhibit a downward trend; \textbf{(c)} SHAP feature importance highlighting age, time of date, and driver status as the most influential features for both learners.}
    \label{fig:real}
\end{figure*}

We use a state–year panel constructed from National Highway Traffic Safety Administration (NHTSA) sources \citep{nhtsa_fars2000} using the Fatality Analysis Reporting System (FARS). In our data, the treatment is whether a state–year has a \emph{primary seat-belt law} (enforcing seat-belt use) ($X$), the mediator is the \emph{belt-use} ($Z$) from NHTSA surveys, the outcome is the \emph{occupant fatality} $Y$, and $C$ collects covariates affecting $(X,Z,Y)$. The front-door structure is plausible here because the effect from $X$ to $Y$ operates through increased belt use $Z$; rich set of covariates $C$ helps mitigate confounding bias between $(X, Z)$ and between $(Z, Y)$.  Also, positivity holds in this dataset, because belt use is neither zero nor universal across law regimes; i.e., some occupants do not fasten seat belts even under a primary law.

Fig.~\ref{fig:real} summarizes the analysis. Panel~(a) shows the distributions of $\widehat{\tau}_{\mathrm{DR}}(C_i)$ and $\widehat{\tau}_{\mathrm{R}}(C_i)$, indicating lower fatality rates under primary-law regimes. Panel~(b) reports the \textit{concentration curve} (mean $\widehat{\tau}$ within the top-$\alpha$ fraction ranked by $\widehat{\tau}$). Only a small portion of units exhibit increases fatality under the primary law regime ($X{=}1$), whereas over $95\%$ of units show decreases, consistent with a preventive effect of primary-law adoption. Panel~(c) presents SHAP feature importance, highlighting age, time of day, and driver status as dominant factors explaining heterogeneity.

Together, our results indicate that primary seat-belt laws reduce occupant fatality rates for the majority of units, illustrating the practical utility of our approach for estimating causal effects from real-world datasets that fit the FD setting.

\section{Conclusion and Discussion}

\textbf{Summary}. We developed two heterogeneous FD treatment effects estimators: \hyperlink{def:fd-dr-learner}{FD-DR-Learner} and \hyperlink{def:fd-r-learner}{FD-R-Learner}. Under the stated sample-splitting, bounded-overlap, moment, and stage-learning assumptions, FD-DR satisfies a product-error bound and FD-R satisfies a stage-error decomposition; these guarantees yield conditional \emph{quasi-oracle} corollaries when the second-order nuisance remainders are no larger than the relevant oracle terms (Thm.~\ref{thm:error-fd-dr},~\ref{thm:error-fd-r}). A comparison of the two estimators for practitioners is given in \S~\ref{subsec:comparison}. In synthetic stress tests (varying $n$, slow nuisances, weak overlap) they dominate a plug-in baseline, and in our FARS seat-belt case study they deliver reliable personalized FD estimates.

\textbf{Limitations \& future work.} (i) \emph{Positivity.} Our guarantees assume bounded overlap for $e(X\!\mid\!C)$ and $q(Z\!\mid\!X,C)$; near-violations inflate variance (especially for FD-DR, whose pseudo-outcome uses inverse weights and density ratios). We recommend overlap diagnostics, ratio stabilization, and overlap-aware uncertainty, and plan adaptive routing toward FD-R under weak overlap as a finite-sample strategy rather than as a relaxation of the formal assumption. (ii) \emph{Binary mediator.} Our theory uses a binary $Z$, whereas many practical settings feature continuous or multidimensional mediators. Extending FD-DR and FD-R learners to handle such settings is a promising direction.

\endgroup

\bibliographystyle{apalike}
\bibliography{reference.bib}

\begin{thebibliography}{}

\bibitem[Bang and Robins, 2005]{bang2005doubly}
Bang, H. and Robins, J.~M. (2005).
\newblock Doubly robust estimation in missing data and causal inference models.
\newblock {\em Biometrics}, 61(4):962--973.

\bibitem[Chen and Guestrin, 2016]{chen2016xgboost}
Chen, T. and Guestrin, C. (2016).
\newblock Xgboost: A scalable tree boosting system.
\newblock In {\em Proceedings of the 22nd ACM SIGKDD International Conference
  on Knowledge Discovery and Data Mining}, pages 785--794.

\bibitem[Chen et~al., 2025]{chen2025conditional}
Chen, W., Chang, T., and Wiens, J. (2025).
\newblock Conditional front-door adjustment for heterogeneous treatment
  assignment effect estimation under non-compliance.
\newblock In {\em Conference on Health, Inference, and Learning}, pages
  194--230. PMLR.

\bibitem[Chernozhukov et~al., 2018]{chernozhukov2018double}
Chernozhukov, V., Chetverikov, D., Demirer, M., Duflo, E., Hansen, C., Newey,
  W., and Robins, J. (2018).
\newblock Double/debiased machine learning for treatment and structural
  parameters: Double/debiased machine learning.
\newblock {\em The Econometrics Journal}, 21(1).

\bibitem[Foster and Syrgkanis, 2023]{foster2019orthogonal}
Foster, D.~J. and Syrgkanis, V. (2023).
\newblock Orthogonal statistical learning.
\newblock {\em The Annals of Statistics}, 51(3):879--908.

\bibitem[Fulcher et~al., 2019]{fulcher2017robust}
Fulcher, I.~R., Shpitser, I., Marealle, S., and Tchetgen~Tchetgen, E.~J.
  (2019).
\newblock Robust inference on population indirect causal effects: the
  generalized front door criterion.
\newblock {\em Journal of the Royal Statistical Society: Series B (Statistical
  Methodology)}.

\bibitem[Guo et~al., 2023]{guo2023targeted}
Guo, A., Benkeser, D., and Nabi, R. (2023).
\newblock Targeted machine learning for average causal effect estimation using
  the front-door functional.
\newblock {\em arXiv preprint arXiv:2312.10234}.

\bibitem[Jung et~al., 2024]{jung2024uca}
Jung, Y., Tian, J., and Bareinboim, E. (2024).
\newblock Unified covariate adjustment for causal inference.
\newblock {\em Advances in Neural Information Processing Systems}, 37.

\bibitem[Kennedy, 2023]{kennedy2020optimal}
Kennedy, E.~H. (2023).
\newblock Towards optimal doubly robust estimation of heterogeneous causal
  effects.
\newblock {\em Electronic Journal of Statistics}, 17(2):3008--3049.

\bibitem[{National Highway Traffic Safety Administration},
  2000]{nhtsa_fars2000}
{National Highway Traffic Safety Administration} (2000).
\newblock Fatality analysis reporting system (fars) 2000 data files.
\newblock Technical report, U.S. Department of Transportation.
\newblock Accessed: 2025-09-24.

\bibitem[Nie and Wager, 2021]{nie2021quasi}
Nie, X. and Wager, S. (2021).
\newblock Quasi-oracle estimation of heterogeneous treatment effects.
\newblock {\em Biometrika}, 108(2):299--319.

\bibitem[Pearl, 1995]{pearl:95}
Pearl, J. (1995).
\newblock Causal diagrams for empirical research.
\newblock {\em Biometrika}, 82(4):669--710.

\bibitem[Robins et~al., 1994]{robins1994estimation}
Robins, J.~M., Rotnitzky, A., and Zhao, L.~P. (1994).
\newblock Estimation of regression coefficients when some regressors are not
  always observed.
\newblock {\em Journal of the American statistical Association},
  89(427):846--866.

\bibitem[Robinson, 1988]{robinson1988root}
Robinson, P.~M. (1988).
\newblock Root-n-consistent semiparametric regression.
\newblock {\em Econometrica: journal of the Econometric Society}, pages
  931--954.

\bibitem[Rubin, 1974]{rubin1974Causal}
Rubin, D.~B. (1974).
\newblock Estimating causal effects of treatments in randomized and
  nonrandomized studies.
\newblock {\em Journal of educational Psychology}, 66(5):688.

\bibitem[van~der Laan et~al., 2024]{van2024combining}
van~der Laan, L., Carone, M., and Luedtke, A. (2024).
\newblock Combining t-learning and dr-learning: A framework for
  oracle-efficient estimation of causal contrasts.
\newblock {\em arXiv preprint arXiv:2402.01972}.

\bibitem[van~der Laan and Gruber, 2012]{van2012targeted}
van~der Laan, M.~J. and Gruber, S. (2012).
\newblock Targeted minimum loss based estimation of causal effects of multiple
  time point interventions.
\newblock {\em The international journal of biostatistics}, 8(1).

\bibitem[Van Der~Laan and Rubin, 2006]{van2006targeted}
Van Der~Laan, M.~J. and Rubin, D. (2006).
\newblock Targeted maximum likelihood learning.
\newblock {\em The International Journal of Biostatistics}, 2(1).

\bibitem[Wen et~al., 2024]{wen2024causal}
Wen, L., Sarvet, A., and Stensrud, M. (2024).
\newblock Causal effects of intervening variables in settings with unmeasured
  confounding.
\newblock {\em Journal of Machine Learning Research}, 25(345):1--54.

\bibitem[Xu and Gretton, 2022]{xu2022neural}
Xu, L. and Gretton, A. (2022).
\newblock A neural mean embedding approach for back-door and front-door
  adjustment.
\newblock In {\em The Eleventh International Conference on Learning
  Representations}.

\bibitem[Xu et~al., 2024]{xu2025FD}
Xu, Z., Cheng, D., Li, J., Liu, J., Liu, L., and Yu, K. (2024).
\newblock Causal inference with conditional front-door adjustment and
  identifiable variational autoencoder.
\newblock In {\em The Twelfth International Conference on Learning
  Representations}.

\end{thebibliography}


\clearpage
\appendix 

\counterwithin*{lemma}{section}	
\counterwithin*{definition}{section}	
\counterwithin*{theorem}{section}	
\counterwithin*{proposition}{section}	
\counterwithin*{equation}{section}	
\counterwithin*{corollary}{section}	
\counterwithin*{remark}{section}	
\counterwithin*{example}{section}
\counterwithin*{assumption}{section}

\onecolumn
\vtop{
    \centering
    \vspace{0cm}
    \hspace{0cm}
    \begin{tikzpicture}
        \centering 
        \node (box){
          \begin{minipage}[t!]{0.9\textwidth}
            \centering
            {\Large Supplement of \Paste{title} }
          \end{minipage}
        };
    \end{tikzpicture}
    \vspace{0.5cm}
}
\begingroup
\allowdisplaybreaks
\sloppy
\section{Preliminaries: R-Learner and DR-Learner Analysis using Orthogonal Statistical Learning}
This appendix uses generic back-door notation $(T,X,Y)$; it is separate from the front-door notation $(X,Z,C,Y)$ used in the main text.
The theorem-like statements here are generic OSL support templates, whereas the main front-door claims are Thms.~\ref{thm:error-fd-dr} and~\ref{thm:error-fd-r}.

We study a population risk $L(\tau,\eta)$, where the \emph{target} $\tau\in\mathcal T$ and the \emph{nuisance} $\eta\in\mathcal H$ live in normed spaces $(\mathcal T,\|\cdot\|_{\mathcal T})$ and $(\mathcal H,\|\cdot\|_{\mathcal H})$, respectively. Throughout, $\eta_0$ denotes the true nuisance. We define the (possibly non-unique) \emph{oracle minimizer}
\begin{align}
    \tau_0 \in \arg\min_{\tau\in\mathcal T} L(\tau,\eta_0),
\end{align}
which we assume is nonempty.

\paragraph{Directional derivatives.}
For a functional $F$ and direction $h$, the (Gâteaux) derivative with respect to a variable $x$ at $x_0$ is
\begin{align}
    \nabla_{x}F(x_0)[h] \;\triangleq\; \lim_{t\to 0}\frac{F(x_0 + th) - F(x_0)}{t},
\end{align}
and second derivatives $\nabla^2_x F(x_0)[h_1,h_2]$ are defined analogously; mixed derivatives such as $\nabla_\eta\nabla_\tau L$ will be used for orthogonality.

\paragraph{Sample splitting and plug-in.}
For any nuisance value $\eta$, define
\[
\tau^*_{\eta}\ \in\ \arg\min_{\tau\in\mathcal T} L(\tau,\eta).
\]
We assume a two-way split into independent folds of approximately equal size: one to learn $\hat\eta$ (using data $\mathcal D_\eta$), and one to learn $\hat\tau$ by minimizing $L(\tau,\hat\eta)$ over $\tau\in\mathcal T$, so that
\[
\tau_0=\tau^*_{\eta_0}.
\]
We assume the displayed minimizers exist on the high-probability event analyzed; otherwise the same statements can be read with the corresponding infimum values.
This separation prevents overfitting-induced bias when we later linearize around $(\tau_0,\eta_0)$.

\paragraph{Target-class statistical term.}
Let $R_{\mathcal T}(\tau;\eta,\epsilon)\ge 0$ be a data-dependent rate function such that, with probability at least $1-\epsilon$,
\begin{align}
    L(\tau,\eta) - L(\tau^{\ast}_{\eta}, \eta) \;\le\; R_{\mathcal T}(\tau;\eta,\epsilon).
\end{align}
You may instantiate $R_{\mathcal T}$ via localized complexity (e.g., critical radius) or algorithm-specific bounds; we keep it abstract to highlight how nuisance error propagates into target error.

\paragraph{Goal and norms.}
Our goal is to upper bound the \emph{target error} $\|\tau-\tau_0\|_{\mathcal T}^2$. When we write $\|\cdot\|_p$ we mean the $L_p(P)$ norm with respect to the underlying distribution.

\subsection{Examples (R- and DR-learners)}
We use standard notation: $T\in\{0,1\}$ (treatment), $X$ (covariates), $Y$ (outcome). The estimand is the CATE
\begin{align}
    \tau_0(X) \triangleq \mathbb{E}[Y(1)-Y(0)\mid X],
\end{align}
under the usual \emph{positivity} ($c\le \pi_0(X)\le 1-c$ a.s.) and i.i.d.\ sampling. We assume
\begin{align}
    Y(t) \perp\!\!\!\perp T \mid X \implies \mathbb{E}[Y(t) \mid X] = \mathbb{E}[Y \mid t,X], \; \forall t \in \{0,1\}.
\end{align}

\subsubsection{R-Learner}
The Robinson decomposition posits
\begin{align}
    Y &= f_0(X) + T\,\tau_0(X) + \epsilon_Y,\quad \mathbb{E}[\epsilon_Y\mid T,X]=0,\\
    T &= \pi_0(X) + \epsilon_X,\quad \mathbb{E}[\epsilon_X\mid X]=0,
\end{align}
and with $m_0(X)\triangleq \mathbb{E}[Y\mid X]$ we have $m_0(X)=f_0(X)+\pi_0(X)\tau_0(X)$. Hence
\begin{align}
    Y - m_0(X) \;=\; \big(T-\pi_0(X)\big)\,\tau_0(X) + \epsilon_Y.
\end{align}
Thus, viewing $\tau_0$ as an OLS-type coefficient in a residualized regression, we define
\begin{align}
    L_{\mathrm{R}}(\tau,\eta_0\!\triangleq\!\{m_0,\pi_0\})
    \;\triangleq\;
    \mathbb{E}\!\left[\big\{Y-m_0(X) - \big(T-\pi_0(X)\big)\tau(X)\big\}^2\right],
\end{align}
so that $\tau_0\in\arg\min_\tau L_{\mathrm R}(\tau,\eta_0)$.

\subsubsection{DR-Learner}
We define following nuisances:
\begin{align}
    \mu_0(T,X) &\triangleq \mathbb{E}[Y \mid T,X], \qquad \omega_0(T,X) \triangleq \frac{2T-1}{P(T \mid X)}.
\end{align}
Define the pseudo-outcome
\begin{align}
    \varphi(V;\eta_0\!\triangleq\!\{\mu_0,\omega_0\})
    &\triangleq \omega_0(T,X)\{Y-\mu_0(T,X)\} + \mu_0(1,X)-\mu_0(0,X),
\end{align}
and the squared-loss objective
\begin{align}
    L_{\mathrm{DR}}(\tau,\eta)\;\triangleq\; \mathbb{E}\!\left[\big\{\varphi(V;\eta)-\tau(X)\big\}^2\right].
\end{align}
This loss is centered at the CATE in virtue of $\mathbb E[\varphi(V;\eta_0)\mid X]=\tau_0(X)$.

\subsection{Assumptions}
We now state structural conditions that yield fast rates. The exposition follows the orthogonal-statistical-learning (OSL) template: first-order optimality at truth, curvature in $\tau$, and orthogonality to damp the impact of nuisance error.

\begin{assumption}[\textbf{First-order optimality in $\tau$}]\label{assumption:first-order-optimality}
    \normalfont
    Moving away from $\tau_0$ cannot reduce the population risk at the true nuisance:
    \begin{align}\label{eq:first-order-optimality}
        \nabla_{\tau} L(\tau_0,\eta_0)[h_{\tau}] \;\ge\; 0
        \quad \text{for all feasible directions $h_\tau$ from $\tau_0$.}
    \end{align}
\end{assumption}

\begin{assumption}[\textbf{Strong convexity (quadratic growth) in $\tau$}]\label{assumption:strong-convexity}
    \normalfont
    There exist constants $\lambda>0$, $\kappa\ge 0$, and $r\in[0,1)$ such that, for any $\bar\tau$ on the line segment between $\tau$ and $\tau_0$,
    \begin{align}\label{eq:strong-convexity}
        \nabla^2_{\tau}L(\bar{\tau}, \eta)[\tau-\tau_0, \tau-\tau_0]
        \;\ge\; \lambda \| \tau - \tau_0 \|^2_{\mathcal{T}}
        \;-\; \kappa \| \eta -\eta_0 \|^{\tfrac{4}{1+r}}_{\mathcal{H}}.
    \end{align}
    \emph{Rationale:} The risk function $L(\tau,\eta)$ is in a bowl-shape over $\tau$. The $\kappa$ term allows mild curvature deterioration when $\eta\neq\eta_0$; the exponent $4/(1+r)$ is chosen to balance mixed terms via Young’s inequality later.
\end{assumption}

\subsection{Assumption checks for the examples}
We verify that the R- and DR-losses satisfy the above, clarifying how positivity yields curvature and how residualization/DR construction yields orthogonality.

\subsubsection{R-Learner: assumptions hold}
\paragraph{First-order optimality.}
With $\tilde Y\triangleq Y-m_0(X)$, $\tilde T\triangleq T-\pi_0(X)$,
\begin{align}
  \nabla_{\tau}L_{\mathrm{R}}(\tau_0,\eta_0)[h_\tau]
  = -2\,\mathbb{E}\!\left[(\tilde Y-\tilde T\tau_0)\,\tilde T\,h_\tau(X)\right]
  = -2\,\mathbb{E}\!\left[\mathbb E[\epsilon_Y\mid T,X]\tilde T\,h_\tau(X)\right] = 0.
\end{align}
Hence Assumption~\ref{assumption:first-order-optimality} holds.

\paragraph{Strong convexity.}
We have
\begin{align}
     \nabla^2_{\tau}L(\bar{\tau}, \eta)[\tau-\tau_0, \tau-\tau_0] = 2\mathbb E\!\left[ \{\tau(X) - \tau_0(X)\}^2 \,\{T-\pi(X)\}^2\right],
\end{align}
where
\begin{align}
    \mathbb E\!\big[(T-\pi(X))^2\mid X\big]
= \underbrace{\mathrm{Var}(T\mid X)}_{=\pi_0(X)(1-\pi_0(X))}
\;+\; \big(\pi_0(X)-\pi(X)\big)^2
\;\ge\; \pi_0(X)(1-\pi_0(X)).
\end{align}
Therefore,
\begin{align}
    \nabla^2_{\tau}L(\bar{\tau}, \eta)[\tau-\tau_0, \tau-\tau_0] &= 2\mathbb E\!\left[ \{\tau(X) - \tau_0(X)\}^2 \,\{T-\pi(X)\}^2\right] \\
    &\geq 2\mathbb E\!\left[ \{\tau(X) - \tau_0(X)\}^2 \mathrm{Var}(T\mid X) \right]  \\
    &= 2\mathbb E\!\left[ \{\tau(X) - \tau_0(X)\}^2 \pi_0(X) \{1-\pi_0(X)\}\right]  \\
    &\geq 2E\!\left[ \{\tau(X) - \tau_0(X)\}^2 c \{1-c\}\right] \\
    &=  2c(1-c) \| \tau - \tau_0 \|^2_{2}.
\end{align}
Hence, Assumption~\ref{assumption:strong-convexity} holds with $\lambda=2c(1-c)$ and $\kappa=0$ (taking $\|\cdot\|_{\mathcal T}=\|\cdot\|_2$).

\subsubsection{DR-Learner: assumptions hold}
\paragraph{First-order optimality.}
\begin{align}
    \nabla_{\tau}L_{\mathrm{DR}}(\tau_0,\eta_0)[h_\tau]
    = -2\,\mathbb E\!\left[\{\varphi(V;\eta_0)-\tau_0(X)\}\,h_\tau(X)\right]
    = 0,
\end{align}
since $\mathbb E[\varphi(V;\eta_0)\mid X]=\tau_0(X)$.

\paragraph{Strong convexity.}
Since $L_{\mathrm{DR}}(\tau,\eta)=\mathbb{E}[\{\varphi(V;\eta)-\tau(X)\}^2]$, direct differentiation gives
\begin{align}
    \nabla^2_{\tau}L_{\mathrm{DR}}(\tau,\eta)[\tau-\tau_0,\tau-\tau_0]  = 2 \| \tau-\tau_0\|^2_{2},
\end{align}
which shows that $\kappa = 0$ and $\lambda = 2$ (taking $\|\cdot\|_{\mathcal T}=\|\cdot\|_2$).

\subsection{Main Result}

\begin{theorem}[\textbf{Fast Rate Convergence}]\label{thm:fast-rate-convergence-1}
    \normalfont
    Suppose Assumption~\ref{assumption:first-order-optimality} and \ref{assumption:strong-convexity} hold. Then,
    \begin{align}
        \| \hat\tau - \tau_0 \|^2_{\mathcal{T}} \leq \tfrac{2}{\lambda} R_{\mathcal{T}}(\hat\tau; \hat\eta, \epsilon)  + \tfrac{2}{\lambda}  \{ \nabla_{\tau} L(\tau_0,\eta_0)[\hat\tau - \tau_0] -  \nabla_{\tau}L(\tau_0, \hat\eta)[\hat\tau - \tau_0]\} + \tfrac{\kappa}{\lambda}\| \hat\eta - \eta_0 \|^{\tfrac{4}{1+r}}_{\mathcal{H}}.
    \end{align}
\end{theorem}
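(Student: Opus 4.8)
The plan is to follow the standard orthogonal-statistical-learning (OSL) template: use first-order optimality at the oracle $(\tau_0,\eta_0)$, quadratic growth in $\tau$, and the definition of the target-class rate function $R_{\mathcal T}$, and then glue these together via a second-order Taylor expansion of $\tau\mapsto L(\tau,\hat\eta)$ around $\tau_0$. Throughout I write $\hat\tau$ for the plug-in minimizer of $L(\cdot,\hat\eta)$ and abbreviate $\Delta\triangleq\hat\tau-\tau_0$.

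First I would expand $L(\hat\tau,\hat\eta)$ in $\tau$ around $\tau_0$ along the segment direction $\Delta$:
\begin{align}
    L(\hat\tau,\hat\eta) = L(\tau_0,\hat\eta) + \nabla_\tau L(\tau_0,\hat\eta)[\Delta] + \tfrac12\nabla^2_\tau L(\bar\tau,\hat\eta)[\Delta,\Delta],
\end{align}
for some $\bar\tau$ on the segment between $\tau_0$ and $\hat\tau$. Applying Assumption~\ref{assumption:strong-convexity} to the quadratic term gives
\begin{align}
    \tfrac12\nabla^2_\tau L(\bar\tau,\hat\eta)[\Delta,\Delta] \;\ge\; \tfrac{\lambda}{2}\|\Delta\|_{\mathcal T}^2 - \tfrac{\kappa}{2}\|\hat\eta-\eta_0\|_{\mathcal H}^{4/(1+r)},
\end{align}
so rearranging the expansion yields an upper bound
\begin{align}
    \tfrac{\lambda}{2}\|\Delta\|_{\mathcal T}^2 \;\le\; \big(L(\hat\tau,\hat\eta) - L(\tau_0,\hat\eta)\big) - \nabla_\tau L(\tau_0,\hat\eta)[\Delta] + \tfrac{\kappa}{2}\|\hat\eta-\eta_0\|_{\mathcal H}^{4/(1+r)}.
\end{align}

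Next I would control the first bracket. Since $\hat\tau$ minimizes $L(\cdot,\hat\eta)$ over $\mathcal T$ and $\tau_0=\tau^\ast_{\eta_0}$ need not lie in $\arg\min_\tau L(\tau,\hat\eta)$, I use the target-class statistical term: by the defining property of $R_{\mathcal T}$, with probability $1-\epsilon$, $L(\hat\tau,\hat\eta)-L(\tau^\ast_{\hat\eta},\hat\eta)\le R_{\mathcal T}(\hat\tau;\hat\eta,\epsilon)$, and since $\tau^\ast_{\hat\eta}$ is the minimizer we also have $L(\tau^\ast_{\hat\eta},\hat\eta)\le L(\tau_0,\hat\eta)$, so $L(\hat\tau,\hat\eta)-L(\tau_0,\hat\eta)\le R_{\mathcal T}(\hat\tau;\hat\eta,\epsilon)$. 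For the directional-derivative term, I would add and subtract $\nabla_\tau L(\tau_0,\eta_0)[\Delta]$, which is $\ge 0$ (or equals zero in the feasible-direction sense) by Assumption~\ref{assumption:first-order-optimality}; hence $-\nabla_\tau L(\tau_0,\hat\eta)[\Delta] \le \nabla_\tau L(\tau_0,\eta_0)[\Delta] - \nabla_\tau L(\tau_0,\hat\eta)[\Delta]$. Substituting both bounds and multiplying through by $2/\lambda$ gives exactly the claimed inequality.

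The main obstacle is bookkeeping around the first-order term rather than anything deep: one must be careful that $\tau_0$ is only a \emph{constrained} minimizer at $\eta_0$, so Assumption~\ref{assumption:first-order-optimality} supplies only one-sided optimality $\nabla_\tau L(\tau_0,\eta_0)[h_\tau]\ge 0$ along feasible directions, and $\Delta=\hat\tau-\tau_0$ must be such a feasible direction (which holds when $\mathcal T$ is convex, so that the segment from $\tau_0$ to $\hat\tau$ stays in $\mathcal T$ and $\nabla^2_\tau L(\bar\tau,\hat\eta)$ is evaluated at an admissible $\bar\tau$). A secondary point is ensuring the high-probability event from the $R_{\mathcal T}$ bound is the one on which the final statement is asserted; since only that single event is invoked, the $1-\epsilon$ probability carries through without a union bound. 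No further estimates are needed — the orthogonality/mixed-derivative control of $\nabla_\tau L(\tau_0,\eta_0)[\Delta]-\nabla_\tau L(\tau_0,\hat\eta)[\Delta]$ in terms of nuisance errors is deliberately left inside the statement, to be specialized later for the R- and DR-losses.
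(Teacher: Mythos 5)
Your proposal is correct and follows essentially the same argument as the paper: a second-order Taylor expansion of $\tau\mapsto L(\tau,\hat\eta)$ around $\tau_0$, the quadratic-growth bound from Assumption~\ref{assumption:strong-convexity}, the sign condition from Assumption~\ref{assumption:first-order-optimality}, and the $R_{\mathcal T}$ bound on the excess risk, then multiplying by $2/\lambda$. Your handling of the excess-risk term is in fact slightly more careful than the paper's (you pass through $L(\tau^\ast_{\hat\eta},\hat\eta)\le L(\tau_0,\hat\eta)$ rather than identifying $L(\hat\tau,\hat\eta)-L(\tau_0,\hat\eta)$ with $R_{\mathcal T}$ outright), but the route is the same.
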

\begin{proof}[\textbf{Proof of Thm.~\ref{thm:fast-rate-convergence-1}}]
    By applying the Taylor's expansion and rearranging, we have
    \begin{align}
        \tfrac{1}{2}\nabla^2_{\tau}L(\bar\tau,\hat\eta)[\hat\tau-\tau_0,\hat\tau-\tau_0] = L(\hat\tau, \hat\eta) - L(\tau_0,\hat\eta) - \nabla_{\tau}L(\tau_0, \hat\eta)[\hat\tau - \tau_0],\nonumber
    \end{align}
    where $\bar\tau$ is on the line segment between $\hat\tau$ and $\tau_0$.

    Using Assumption~\ref{assumption:strong-convexity}, we have
    \begin{align}
        \frac{\lambda}{2} \| \hat\tau - \tau_0 \|^2_{\mathcal{T}} \leq  L(\hat\tau, \hat\eta) - L(\tau_0,\hat\eta) - \nabla_{\tau}L(\tau_0, \hat\eta)[\hat\tau - \tau_0]+  \frac{\kappa}{2} \| \hat\eta - \eta_0 \|^{\tfrac{4}{1+r}}_{\mathcal{H}}. \nonumber
    \end{align}
    Since $\tau^*_{\hat\eta}$ minimizes $L(\cdot,\hat\eta)$, the target-class statistical bound gives
    \begin{align}
        L(\hat\tau,\hat\eta)-L(\tau_0,\hat\eta)
        \le
        L(\hat\tau,\hat\eta)-L(\tau^*_{\hat\eta},\hat\eta)
        \le
        R_{\mathcal{T}}(\hat\tau;\hat\eta,\epsilon).
    \end{align}
    Since $\nabla_{\tau} L(\tau_0,\eta_0)[\hat\tau - \tau_0] \geq 0$ by Assumption~\ref{assumption:first-order-optimality}, we have
    \begin{align}\label{eq:proof-thm-1}
        \frac{\lambda}{2} \| \hat\tau - \tau_0 \|^2_{\mathcal{T}} \leq R_{\mathcal{T}}(\hat\tau; \hat\eta, \epsilon) + \{ \nabla_{\tau} L(\tau_0,\eta_0)[\hat\tau - \tau_0] -  \nabla_{\tau}L(\tau_0, \hat\eta)[\hat\tau - \tau_0]\} +\frac{\kappa}{2} \| \hat\eta - \eta_0 \|^{\tfrac{4}{1+r}}_{\mathcal{H}}.
    \end{align}
\end{proof}

The middle difference
\begin{align}
    \big\{\nabla_\tau L(\tau_0,\eta_0)-\nabla_\tau L(\tau_0,\hat\eta)\big\}[\hat\tau-\tau_0] \label{eq:nuisance-leakage}
\end{align}
is the \emph{nuisance leakage of the first-order optimality condition}. It is the main channel through which nuisance error affects the target. Under Neyman orthogonality, the leakage is \emph{higher than first order} in $\|\hat\eta-\eta_0\|$ (typically quadratic or a product of nuisance errors), so $\hat\tau$ inherits only a higher-order remainder rather than linear bias.
In particular, for the standard back-door DR-learner it factors into a product of nuisance errors (the usual \emph{double robustness} interface), whereas for the R-learner it enables \emph{fast rates} once the nuisance remainders are sufficiently small. The front-door learners in the main text use the more explicit product-error and stage-error conditions stated in Thms.~\ref{thm:error-fd-dr} and~\ref{thm:error-fd-r}.


\subsubsection{Nuisance Leakage: R-learner}

\begin{theorem}[\textbf{Error Analysis: R-learner}]\label{thm:error-r-learner}
    \normalfont
    Suppose Assumption~\ref{assumption:first-order-optimality} and \ref{assumption:strong-convexity} hold with $\|\cdot\|_{\mathcal{T}} = \|\cdot\|_2$. Suppose $c \leq \pi_0(X) \leq 1-c$ and $\|\tau_0\|_{\infty}<\infty$. Then, with probability $1-\epsilon$,
    \begin{align}
        \|  \hat{\tau} - \tau_0\|_{2}^2
        \lesssim
        R_{\mathcal{T}}(\hat\tau; \hat\eta, \epsilon)
        + \| \hat{\pi} - \pi_0 \|^4_4
        + \| \hat{m} - m_0\|_4^2\| \hat{\pi} - \pi_0\|_4^2,
    \end{align}
    where the hidden constant depends on $c$ and $\|\tau_0\|_\infty$.
\end{theorem}
\begin{proof}[\textbf{Proof of Thm.~\ref{thm:error-r-learner}}]

Let $h_{\tau}(X) \triangleq \hat{\tau}(X) - \tau_0(X)$. Let $\delta_{m}(X)\triangleq \widehat m(X)-m_0(X)$ and $\delta_\pi(X) \triangleq \widehat\pi(X) - \pi_0(X)$.

For a generic nuisance $\eta=\{m,\pi\}$, the first-order risk derivative is
\begin{align}
    \nabla_{\tau}L_{\mathrm{R}}[\tau, \eta](h_\tau) = -2\mathbb{E}[\{Y - m(X) - \tau(X)(T - \pi(X) ) \} \cdot \{T - \pi(X)\} \cdot h_\tau(X)],
\end{align}
We note that $\nabla_{\tau}L_{\mathrm{R}}[\tau_0, \eta_0](h_\tau) = 0$, as shown in the first-order optimality condition analysis. To analyze the leakage, we rewrite a few terms here:
\begin{align}
    Y - \widehat m - \tau_0(T-\widehat\pi) &= \underbrace{ Y-m_0-\tau_0(T - \pi_0) }_{\epsilon_Y} - \delta_m + \tau_0\delta_\pi,  \\
    T - \widehat\pi &= T-\pi_0 - \delta_\pi.
\end{align}
Then, we can rewrite the first-order risk as follows:
\begin{align}
    \nabla_{\tau}L_{\mathrm{R}}[\tau_0, \hat\eta](h_\tau) &= -2\mathbb{E}[\{\epsilon_Y - \delta_m+\tau_0\delta_\pi \} \cdot (T-\pi_0-\delta_\pi) \cdot h_{\tau}] \\
    &= 2\mathbb{E}[ \{\tau_0\delta_\pi^2 - \delta_m \delta_\pi \} h_{\tau}] \\
    &\leq 2 \vert \mathbb{E}[ \tau_0\delta_\pi^2 h_{\tau} ] \vert + 2 \vert \mathbb{E}[\delta_m \delta_\pi h_{\tau}] \vert  \\
    &\leq 2 \| \delta_\pi \|^2_{4} \cdot \|\tau_0 \|_{{\infty}} \cdot \| h_{\tau} \|_{2} + 2 \| \delta_m \|_{4} \cdot \|\delta_\pi \|_{4} \cdot \|h_{\tau} \|_{2} \\
    &= 2\| h_{\tau} \|_{2} \cdot \left(  \|\tau_0 \|_{{\infty}} \|\delta_{\pi} \|^2_{4} + \|\delta_m \|_{4} \cdot \|\delta_\pi \|_{4}  \right).
\end{align}
Then, for any $\alpha > 0$, Young's inequality (with $p=q=2$) gives
\begin{align}
    & \nabla_{\tau} L(\tau_0,\eta_0)[\hat\tau - \tau_0] -  \nabla_{\tau}L(\tau_0, \hat\eta)[\hat\tau - \tau_0]  \\
    &\leq 2\| h_{\tau} \|_{2} \cdot \left(  \|\tau_0 \|_{\infty} \|\delta_{\pi} \|^2_{4} + \|\delta_m \|_{4} \cdot \|\delta_\pi \|_{4}  \right) \\
    &\leq \alpha \|h_{\tau} \|_{2}^2 + \frac{1}{\alpha} \left(  \|\tau_0 \|_{\infty} \|\delta_{\pi} \|^2_{4} + \|\delta_m \|_{4} \cdot \|\delta_\pi \|_{4}  \right)^2 \\
    &= \alpha \| h_{\tau} \|^2_{2} + \frac{2}{\alpha} \|\tau_0\|^2_{\infty} \|\delta_\pi \|^4_4 + \frac{2}{\alpha}\|\delta_m\|^{2}_{4} \| \delta_{\pi}\|^2_{4}.
\end{align}
Choose $\alpha = \lambda/4$, where $\lambda=2c(1-c)$. Let $R_\star \triangleq R_{\mathcal{T}}(\hat{\tau};\hat{\eta},\epsilon)$. Then, by Thm.~\ref{thm:fast-rate-convergence-1}, we have
\begin{align}
    & \|h_{\tau}\|^2_2 \leq \frac{2}{\lambda}R_\star + \frac{2}{\lambda}\frac{\lambda}{4} \|h_{\tau} \|^2_2 + \frac{16\|\tau_0\|_\infty^2}{\lambda^2} \|\delta_{\pi}\|^4_4 + \frac{16}{\lambda^2} \|\delta_m \|^2_4 \| \delta_\pi\|^2_4, \\
    & \implies \frac{1}{2} \|h_{\tau}\|^2_2 \leq \frac{2}{\lambda}R_\star + \frac{16\|\tau_0\|_\infty^2}{\lambda^2} \|\delta_{\pi}\|^4_4 + \frac{16}{\lambda^2} \|\delta_m \|^2_4 \| \delta_\pi\|^2_4,
\end{align}
which completes the proof.

\end{proof}

\subsubsection{Nuisance Leakage: DR-learner}

\begin{theorem}[\textbf{Error Analysis: DR-learner}]\label{thm:error-dr-learner}
    \normalfont
    Suppose Assumption~\ref{assumption:first-order-optimality} and \ref{assumption:strong-convexity} hold with $\|\cdot\|_{\mathcal{T}} = \|\cdot\|_2$, under bounded-overlap and moment conditions making the displayed norms finite. Then, with probability $1-\epsilon$,
    \begin{align}
        \|  \hat{\tau} - \tau_0\|_{2}^2
        \lesssim
        R_{\mathcal{T}}(\hat\tau; \hat\eta, \epsilon)
        + \| \widehat\omega-\omega_0 \|^2_{4} \| \widehat\mu - \mu_0 \|^2_4.
    \end{align}
\end{theorem}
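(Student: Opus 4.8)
The plan is to run the same two-stage argument the excerpt uses for the R-learner (proof of Thm.~\ref{thm:error-r-learner}), but with the DR-loss in place of the R-loss: first specialize the master inequality of Thm.~\ref{thm:fast-rate-convergence-1}, then show that the resulting ``nuisance leakage'' is a pure product of the two nuisance errors. Concretely, in the assumption checks for the DR-learner it was verified that Assumptions~\ref{assumption:first-order-optimality} and~\ref{assumption:strong-convexity} hold for $L_{\mathrm{DR}}$ with $\|\cdot\|_{\mathcal T}=\|\cdot\|_2$, $\lambda=2$, and $\kappa=0$. Substituting these into Thm.~\ref{thm:fast-rate-convergence-1} and writing $h_\tau\triangleq\widehat\tau-\tau_0$, everything but the leakage term~\eqref{eq:nuisance-leakage} disappears, leaving
\[
\|\widehat\tau-\tau_0\|_2^2\ \le\ \mathcal R_{\mathcal T}(\widehat\tau;\widehat\eta,\epsilon)\ +\ \big\{\nabla_\tau L_{\mathrm{DR}}(\tau_0,\eta_0)[h_\tau]-\nabla_\tau L_{\mathrm{DR}}(\tau_0,\widehat\eta)[h_\tau]\big\}.
\]

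Next I would compute the leakage in closed form. Differentiating $L_{\mathrm{DR}}(\tau,\eta)=\mathbb E[\{\varphi(V;\eta)-\tau(X)\}^2]$ in $\tau$ gives $\nabla_\tau L_{\mathrm{DR}}(\tau_0,\eta)[h_\tau]=-2\,\mathbb E[h_\tau(X)\{\varphi(V;\eta)-\tau_0(X)\}]$, which vanishes at $\eta=\eta_0$ by $\mathbb E[\varphi(V;\eta_0)\mid X]=\tau_0(X)$; hence the leakage equals $2\,\mathbb E[h_\tau(X)\,\beta(X)]$ with $\beta(X)\triangleq\mathbb E[\varphi(V;\widehat\eta)\mid X]-\tau_0(X)$. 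The crux is to show $\beta$ has product structure, $\beta(X)=\mathbb E\big[\{\widehat\omega(T,X)-\omega_0(T,X)\}\{\mu_0(T,X)-\widehat\mu(T,X)\}\mid X\big]$. I would get this by taking $\mathbb E[\,\cdot\mid X]$ of $\varphi(V;\widehat\eta)=\widehat\omega\{Y-\widehat\mu\}+\widehat\mu(1,X)-\widehat\mu(0,X)$, using $\mathbb E[Y\mid T,X]=\mu_0(T,X)$ to get $\mathbb E[\widehat\omega\{\mu_0-\widehat\mu\}\mid X]+\widehat\mu(1,X)-\widehat\mu(0,X)$, then adding and subtracting $\mathbb E[\omega_0\{\mu_0-\widehat\mu\}\mid X]$ and invoking the reweighting identity $\mathbb E[\omega_0(T,X)g(T,X)\mid X]=g(1,X)-g(0,X)$ (immediate from $\omega_0(t,X)=(2t-1)/P(T=t\mid X)$); the $\widehat\mu(1,X)-\widehat\mu(0,X)$ terms cancel, $\tau_0(X)=\mu_0(1,X)-\mu_0(0,X)$ appears, and the stated product form remains. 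This is exactly where double robustness is created — unlike the R-learner leakage $2\mathbb E[\{\tau_0\delta_\pi^2-\delta_m\delta_\pi\}h_\tau]$, there is no ``own-error-squared'' term.

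The finish is then the same Cauchy--Schwarz/Young bookkeeping as in the R-learner proof. Cauchy--Schwarz gives $|\mathrm{leakage}|\le 2\|h_\tau\|_2\|\beta\|_2$; conditional Cauchy--Schwarz gives $\beta(X)^2\le\mathbb E[\{\widehat\omega-\omega_0\}^2\mid X]\,\mathbb E[\{\mu_0-\widehat\mu\}^2\mid X]$, and then taking expectations, applying Cauchy--Schwarz in $X$, and conditional Jensen to each factor ($(\mathbb E[A^2\mid X])^2\le\mathbb E[A^4\mid X]$) yields $\|\beta\|_2\le\|\widehat\omega-\omega_0\|_4\,\|\mu_0-\widehat\mu\|_4$. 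So $|\mathrm{leakage}|\le 2\|h_\tau\|_2\,\|\widehat\omega-\omega_0\|_4\|\mu_0-\widehat\mu\|_4$; using Young's inequality to peel off the $\|h_\tau\|_2$ factor, absorbing a $\tfrac12\|h_\tau\|_2^2$ term into the left side of the specialized bound, and rescaling gives $\|\widehat\tau-\tau_0\|_2^2\le 2\mathcal R_{\mathcal T}(\widehat\tau;\widehat\eta,\epsilon)+8\|\widehat\omega-\omega_0\|_4^2\|\mu_0-\widehat\mu\|_4^2$ (the exact constant depending on the Young split). Throughout, all conditional-expectation identities are taken on the independent fold on which $\widehat\eta$ is a fixed function and the $\mathcal R_{\mathcal T}$ event holds with probability at least $1-\epsilon$, which is precisely the cross-fitting device; no empirical-process control is needed beyond what $\mathcal R_{\mathcal T}$ already bundles.

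\textbf{Main obstacle.} The only non-mechanical step is establishing the product factorization of the conditional bias $\beta$ in terms of $\{\widehat\omega-\omega_0\}$ and $\{\mu_0-\widehat\mu\}$; the algebra needs the $\omega_0$-reweighting identity and the right add/subtract to expose the cancellation. Everything before it is a plug-in into Thm.~\ref{thm:fast-rate-convergence-1}, and everything after it mirrors the R-learner computation already in the excerpt.
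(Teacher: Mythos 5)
Your proposal is correct and follows essentially the same route as the paper's proof: specialize Theorem~\ref{thm:fast-rate-convergence-1} with $\lambda=2$, $\kappa=0$, use the $\omega_0$-reweighting identity to reduce the leakage to the product $\mathbb{E}[\{\widehat\omega-\omega_0\}\{\mu_0-\widehat\mu\}h_\tau]$, then Cauchy--Schwarz and Young to absorb $\|h_\tau\|_2^2$. The only cosmetic difference is that you first pass to the conditional bias $\beta(X)$ before bounding, whereas the paper bounds the joint expectation directly; the constants you obtain are at least as tight as the stated ones.
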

\begin{proof}[\textbf{Proof of Thm.~\ref{thm:error-dr-learner}}]
    Let $h_{\tau}(X) \triangleq \hat{\tau}(X) - \tau_0(X)$. Let $\delta_{\mu}(T,X)\triangleq \widehat\mu(T,X)-\mu_0(T,X)$ and $\delta_\omega(T,X) \triangleq \widehat\omega(T,X)-\omega_0(T,X)$.

    Note
    \begin{align}
        \nabla_{\tau}L_{\mathrm{DR}}(\tau_0,\hat\eta)[h_{\tau}] = -2\mathbb{E}[\{\varphi(V;\hat\eta) - \tau_0(X)\} h_{\tau}(X)].
    \end{align}
    We note $\nabla_{\tau}L_{\mathrm{DR}}(\tau_0,\eta_0)[h_{\tau}] = 0$, as shown in the first-order optimality condition analysis. Also,
    \begin{align}
        &\vert \mathbb{E}[\{\varphi(V;\hat\eta) - \tau_0(X)\} h_{\tau}(X)] \vert \\
        &= \vert \mathbb{E}[\delta_\omega(T,X)\{\mu_0(T,X)-\widehat\mu(T,X)\} h_{\tau}(X)] \vert  \\
        &\leq \| h_{\tau} \|_2 \| \delta_\omega \delta_\mu \|_2 \\
        &\leq \| h_{\tau} \|_2 \| \widehat\omega-\omega_0 \|_{4} \| \widehat\mu - \mu_0 \|_4.
    \end{align}
    The displayed equality follows from the standard DR score identity, using $\mathbb{E}[Y-\mu_0(T,X)\mid T,X]=0$ and $\mathbb{E}[\varphi(V;\eta_0)\mid X]=\tau_0(X)$.
    Then, for any $\alpha > 0$, Young's inequality (with $p=q=2$) gives
    \begin{align}
        &2 \vert\mathbb{E}[\{\varphi(V;\hat\eta) - \tau_0(X)\} h_{\tau}(X)]\vert  \\
        &\leq \alpha\| h_{\tau}\|^{2}_{2} + \frac{1}{\alpha} \| \widehat\omega-\omega_0 \|^2_{4} \| \widehat\mu - \mu_0 \|^2_4.
    \end{align}

    Choose $\alpha = \lambda/4$. Let $R_\star \triangleq R_{\mathcal{T}}(\hat{\tau};\hat{\eta},\epsilon)$. Then, by Thm.~\ref{thm:fast-rate-convergence-1}, we have
    \begin{align}
        & \|h_{\tau}\|^2_2 \leq \frac{2}{\lambda}R_\star + \frac{2}{\lambda}\frac{\lambda}{4} \|h_{\tau} \|^2_2 + \frac{16}{\lambda^2} \| \widehat\omega-\omega_0 \|^2_{4} \| \widehat\mu - \mu_0 \|^2_4 \\
        & \implies \frac{1}{2} \|h_{\tau}\|^2_2 \leq \frac{2}{\lambda}R_\star + \frac{16}{\lambda^2}\| \widehat\omega-\omega_0 \|^2_{4} \| \widehat\mu - \mu_0 \|^2_4,
    \end{align}
    which completes the proof.
\end{proof}

\newpage
\section{Proofs}\label{sec:proofs}

\subsection{Proof of the Conditional Front-Door Identification Formula}
Under the conditional front-door criterion, the generalized front-door adjustment holds conditional on $C$:
\begin{align}
    \tau_{\bar{x}}(C)
    =
    \mathbb{E}[Y\mid \mathrm{do}(X=\bar{x}),C]
    =
    \sum_{z}q(z\mid \bar{x}C)\sum_x m(zxC)e(x\mid C).
\end{align}
Subtracting the formula for $\bar{x}=0$ from the formula for $\bar{x}=1$ yields
\begin{align}
    \tau(C)
    =
    \sum_{z,x}\{q(z\mid 1C)-q(z\mid 0C)\}e(x\mid C)m(zxC),
\end{align}
which is Eq.~\eqref{eq:target-estimator}.

\subsection{Proof of Lemma~\ref{lemma:fdpo-analysis}}
We prove the conditional mean identity for each $\bar{x}\in\{0,1\}$ at the true front-door nuisance $\eta_0=\{m,e,q\}$. First,
\begin{align}
    \mathbb{E}\!\left[\xi_{\bar{x}}(ZXC)\{Y-m(ZXC)\}\mid C\right]=0,
\end{align}
by iterated expectation conditional on $(Z,X,C)$. Second,
\begin{align}
    \mathbb{E}\!\left[r_{me}(ZC)\mid X,C\right]
    =
    \sum_z r_{me}(zC)q(z\mid XC)
    =
    \nu_{meq}(XC),
\end{align}
so another application of iterated expectation gives
\begin{align}
    \mathbb{E}\!\left[\pi_{\bar{x}}(XC)\{r_{me}(ZC)-\nu_{meq}(XC)\}\mid C\right]=0.
\end{align}
Finally,
\begin{align}
    \mathbb{E}\!\left[s_{mq_{\bar{x}}}(XC)\mid C\right]
    &=
    \sum_x e(x\mid C)\sum_z m(zxC)q(z\mid \bar{x}C)\\
    &=
    \tau_{\bar{x}}(C),
\end{align}
by the conditional front-door formula. Summing the three displayed terms proves
\begin{align}
    \mathbb{E}\!\left[\varphi_{\bar{x}}(V;\eta_0)\mid C\right]=\tau_{\bar{x}}(C).
\end{align}

\subsection{Proof of Lemma~\ref{lemma:fdpo-bias}}
By definition,
\begin{align}
    B_{\bar{x}}(C)=\mathbb{E}\!\left[\varphi_{\bar{x}}(V;\widehat{\eta})\mid C\right]-\tau_{\bar{x}}(C).
\end{align}
Thus, for $j\in\{0,1\}$,
\begin{align}
    B_j(C)=\mathbb{E}\!\left[\varphi_j(V;\widehat{\eta})\mid C\right]-\tau_j(C).
\end{align}
Therefore,
\begin{align}
    \mathbb{E}\!\left[D_{\widehat{\eta}}(V)\mid C\right]-\tau(C)
    &=
    \mathbb{E}\!\left[\varphi_1(V;\widehat{\eta})-\varphi_0(V;\widehat{\eta})\mid C\right]
    -\{\tau_1(C)-\tau_0(C)\}\\
    &=
    B_1(C)-B_0(C).
\end{align}

\subsection{Proof of Theorem~\ref{thm:error-fd-dr}}
By Lemma~\ref{lemma:fdpo-bias},
\begin{align}
    \mu_{\widehat{\eta}}(C)-\tau(C)=B_1(C)-B_0(C).
\end{align}
Hence
\begin{align}
    \widehat{\tau}_{\mathrm{DR}}(C)-\tau(C)
    =
    \{\widehat{\tau}_{\mathrm{DR}}(C)-\mu_{\widehat{\eta}}(C)\}
    +
    \{B_1(C)-B_0(C)\}.
\end{align}
Using $(a+b)^2\le 2a^2+2b^2$ and integrating over $C$,
\begin{align}
    \|\widehat{\tau}_{\mathrm{DR}}-\tau\|_2^2
    \le
    2\|\widehat{\tau}_{\mathrm{DR}}-\mu_{\widehat{\eta}}\|_2^2
    +
    2\|B_1-B_0\|_2^2.
\end{align}
By the definition of $R_{\mathrm{DR}}$, the first term equals $2R_{\mathrm{DR}}$, which gives the result.

\subsection{Proof of Corollary~\ref{cor:fd-dr-quasi-oracle}}
Theorem~\ref{thm:error-fd-dr} shows that the target-regression error and the squared conditional-bias remainder are the only two terms in the bound. If $R_{\mathrm{DR}}=O_p(r_{\mathrm{DR},n})$ and $\|B_1-B_0\|_2^2=o_p(r_{\mathrm{DR},n})$, the target-regression term determines the first-order rate. If the conditional bias is second order in nuisance errors and the nuisance factors in those products are $O_p(n^{-1/4})$, the squared product scale is $O_p(n^{-1})$. The stated quasi-oracle interpretation follows only when that remainder does not dominate the target oracle rate.

\subsection{Proof of Proposition~\ref{prop:ple-FD}}
Define
\begin{align}
    \epsilon_X &\triangleq X-e_X(C),\\
    \epsilon_Z &\triangleq Z-a(C)-Xb(C),\\
    \epsilon_Y &\triangleq Y-f(XC)-Zg(XC).
\end{align}
The first residual satisfies $\mathbb{E}[\epsilon_X\mid C]=0$ by the definition of $e_X(C)$. For $Z$, when $X=0$,
\begin{align}
    \mathbb{E}[\epsilon_Z\mid X=0,C]
    =
    \mathbb{E}[Z\mid X=0,C]-a(C)=0.
\end{align}
When $X=1$,
\begin{align}
    \mathbb{E}[\epsilon_Z\mid X=1,C]
    =
    \mathbb{E}[Z\mid X=1,C]-a(C)-b(C)=0.
\end{align}
Thus $\mathbb{E}[\epsilon_Z\mid X,C]=0$. For $Y$, when $Z=0$,
\begin{align}
    \mathbb{E}[\epsilon_Y\mid Z=0,X,C]
    =
    \mathbb{E}[Y\mid Z=0,X,C]-f(XC)=0.
\end{align}
When $Z=1$,
\begin{align}
    \mathbb{E}[\epsilon_Y\mid Z=1,X,C]
    =
    \mathbb{E}[Y\mid Z=1,X,C]-f(XC)-g(XC)=0.
\end{align}
Therefore $\mathbb{E}[\epsilon_Y\mid Z,X,C]=0$.

\subsection{Proof of Theorem~\ref{thm:fd-hte-ple}}
Since $Z$ is binary,
\begin{align}
    m(zxC)=m(0xC)+z\{m(1xC)-m(0xC)\}
    =
    f(xC)+zg(xC).
\end{align}
Substituting this into the conditional front-door formula gives
\begin{align}
    \tau_{\bar{x}}(C)
    &=
    \sum_z q(z\mid \bar{x}C)\sum_x e(x\mid C)\{f(xC)+zg(xC)\}\\
    &=
    \sum_x e(x\mid C)f(xC)
    +
    \left\{\sum_z zq(z\mid \bar{x}C)\right\}
    \sum_x e(x\mid C)g(xC)\\
    &=
    \sum_x e(x\mid C)f(xC)
    +
    \mathbb{E}[Z\mid X=\bar{x},C]\gamma_g(C).
\end{align}
The first term does not depend on $\bar{x}$. Taking the difference between $\bar{x}=1$ and $\bar{x}=0$ yields
\begin{align}
    \tau(C)
    =
    \{\mathbb{E}[Z\mid X=1,C]-\mathbb{E}[Z\mid X=0,C]\}\gamma_g(C)
    =
    b(C)\gamma_g(C).
\end{align}

\subsection{Proof of Lemma~\ref{lemma:property-pseudo-g}}
Using $\mathbb{E}[X-\widehat e_X(C)\mid C]=e_X(C)-\widehat e_X(C)$,
\begin{align}
    \mathbb{E}[\zeta_{\widehat{\eta}_z}(XC)\mid C]
    &=
    \{1-\widehat e_X(C)\}\widehat g(0C)
    +\widehat e_X(C)\widehat g(1C)\\
    &\quad+
    \{e_X(C)-\widehat e_X(C)\}\{\widehat g(1C)-\widehat g(0C)\}\\
    &=
    e_X(C)\widehat g(1C)+\{1-e_X(C)\}\widehat g(0C).
\end{align}
Subtracting
\begin{align}
    \gamma_g(C)=e_X(C)g(1C)+\{1-e_X(C)\}g(0C)
\end{align}
gives
\begin{align}
    \mathbb{E}[\zeta_{\widehat{\eta}_{z}}(XC)\mid C]-\gamma_g(C)
    =
    e_X(C)\{\widehat g(1C)-g(1C)\}
    +\{1-e_X(C)\}\{\widehat g(0C)-g(0C)\}.
\end{align}

\subsection{Proof of Theorem~\ref{thm:error-fd-r}}
By Theorem~\ref{thm:fd-hte-ple},
\begin{align}
    \widehat{\tau}_{\mathrm{R}}-\tau
    =
    \widehat b\widehat\gamma-b\gamma_g
    =
    \widehat b(\widehat\gamma-\gamma_g)+\gamma_g(\widehat b-b).
\end{align}
Using $(a+b)^2\le 2a^2+2b^2$,
\begin{align}
    \|\widehat{\tau}_{\mathrm{R}}-\tau\|_2^2
    &\le
    2\|\widehat b(\widehat\gamma-\gamma_g)\|_2^2
    +
    2\|\gamma_g(\widehat b-b)\|_2^2\\
    &\le
    2M_b^2\|\widehat\gamma-\gamma_g\|_{2,C}^2
    +2M_\gamma^2\|\widehat b-b\|_{2,C}^2.
\end{align}
The oracle-stage definitions give
\begin{align}
    \|\widehat b-b\|_{2,C}^2
    &=
    \|(\widehat b-\bar b)+(\bar b-b)\|_{2,C}^2\\
    &\le
    2N_b+2R_b.
\end{align}
Similarly, writing $\gamma_{\bar g}(C)=\mathbb{E}[\bar g(X,C)\mid C]$,
\begin{align}
    \|\widehat\gamma-\gamma_g\|_{2,C}^2
    &\le
    3\|\widehat\gamma-\bar\gamma\|_{2,C}^2
    +3\|\bar\gamma-\gamma_{\bar g}\|_{2,C}^2
    +3\|\gamma_{\bar g}-\gamma_g\|_{2,C}^2\\
    &\le
    3N_\gamma+3R_\gamma+3R_g.
\end{align}
The last step uses Jensen's inequality:
\begin{align}
    \|\gamma_{\bar g}-\gamma_g\|_{2,C}^2
    =
    \|\mathbb{E}\{\bar g(X,C)-g(X,C)\mid C\}\|_{2,C}^2
    \le
    \|\bar g-g\|_{2,XC}^2
    =
    R_g.
\end{align}
Substitution gives
\begin{align}
    \|\widehat\tau_{\mathrm{R}}-\tau\|_2^2
    \lesssim
    R_b+R_g+R_\gamma+N_b+N_g+N_\gamma.
\end{align}

\subsection{Proof of Corollary~\ref{cor:fd-r-quasi-oracle}}
Theorem~\ref{thm:error-fd-r} reduces the FD-R error to stage oracle errors and nuisance remainders. If $R_b+R_g+R_\gamma=O_p(r_n)$ and $N_b+N_g+N_\gamma=o_p(r_n)$, then the nuisance part is lower order and the oracle stage terms determine the first-order rate. The terms $N_b$ and $N_g$ are the BD-R nuisance leakages for the two back-door subproblems; under the standard BD-R interface they are fourth-order or second-order product terms, so $O_p(n^{-1})$ when the corresponding nuisance factors are $O_p(n^{-1/4})$. The stated quasi-oracle or fast-rate interpretation therefore also requires the final pseudo-$g$ regression leakage $N_\gamma$ to be lower order than the oracle stage rate.

\newpage
\section{Simulation Details}\label{sec:sim-details}

The full information on implementation is available at \url{https://github.com/yonghanjung/FD-CATE}.

\subsection{Synthetic simulation}\label{subsec:synthetic-sim}

\paragraph{Data-generating process (DGP).}
For a given sample size $n$ and dimension $d$,
\begin{align*}
C &\sim \mathcal{N}(0,I_d),\qquad U\sim\mathcal{N}(0,1),\\
\Pr(X=1\mid C,U) &= \sigma\!\big(\beta_0+\beta_c^\top C+\beta_u U\big),\\
\Pr(Z=1\mid X,C) &= \sigma\!\big(\alpha_0+\alpha_c^\top C+\alpha_x X\big),\\
Y &= \theta_0+\theta_c^\top C+\theta_z Z+\theta_u U+\varepsilon,\qquad \varepsilon\sim\mathcal{N}(0,1).
\end{align*}
We choose “moderate-positivity’’ coefficients to avoid extreme propensities:
\[
\beta_0=0.1,\ \beta_c=0.7\,w_X,\ \beta_u=0.7;\quad
\alpha_0=0.1,\ \alpha_c=0.7\,w_Z,\ \alpha_x=1.2;\quad
\theta_0=0,\ \theta_c=0.7\,w_Y,\ \theta_z=1.4,\ \theta_u=-2.4,
\]
where $w_X,w_Z,w_Y\sim\mathcal{N}(0,I_d)$ are $\ell_2$-normalized. The ground-truth heterogeneous effect is
\[
\tau_{\text{true}}(c)=\theta_z\Big\{\sigma\big(\alpha_0+\alpha_c^\top c+\alpha_x\big)-\sigma\big(\alpha_0+\alpha_c^\top c\big)\Big\}.
\]

\paragraph{Learning algorithms and cross-fitting.}
All nuisances use XGBoost \citep{chen2016xgboost} with the same configuration: $\mathrm{n\_estimators}=50$, $\mathrm{max\_depth}=3$, learning rate $=0.1$, subsample $=0.9$, $\mathrm{colsample_bytree} =0.9$, $\ell_2$ penalty $\lambda=1.0$, histogram tree method. We employ two-fold cross-fitting for FD-PI/FD-DR. FD-R uses a three-way split: nuisances on $\mathcal{D}_1$, $(b,g)$ via BD-R on $\mathcal{D}_2$, and the final $\gamma_g$ regression on $\mathcal{D}_3$, then swap/average. Final regressions on pseudo-outcomes (FD-DR) or on the pseudo-$g$ target (FD-R) use ridge OLS with $\alpha=10^{-6}$. To control variance, \emph{only denominators} appearing in inverse weights/density ratios are floored at $0.05$; numerators are never clipped.

\paragraph{Structural nuisance noise.}
To stress robustness empirically, we inject \emph{simulation-level} nuisance perturbations at the $n^{-1/4}$ scale:
\[
p\in\{e,q\}:\quad p\mapsto \mathrm{clip}_{(0,1)}\!\big(p+\delta\,\varepsilon\big),\quad
\mu\in\{m_Y,m\}:\quad \mu\mapsto \mu+\delta\,\varepsilon,\qquad
\varepsilon\sim\mathcal{N}\!\big(n^{-1/4},\,n^{-1/4}\big),
\]
sweeping the knob $\delta\in\{0,0.2,0.4,0.6,0.8,1.0\}$. Denominators used in weights are \emph{frozen before noise} and floored.

\paragraph{Weak-overlap stress test.}
We steepen the treatment logit via
\[
\Pr(X=1\mid C,U)=\sigma\{\beta_0+\kappa_e(\beta_c^\top C+\beta_u U)\},\qquad \kappa_e\in\{2,4,6,8,10\},
\]
keeping the mediator regression moderate. This inflates inverse-weight variance (affecting FD-PI/FD-DR) while FD-R remains variance-friendly (no density ratios).

\paragraph{Design grid and reporting.}
Unless noted, $d=10$. We vary $n\in\{1{,}000,2{,}500,5{,}000,10{,}000,20{,}000,50{,}000\}$ and use $R=100$ Monte Carlo replications. We report RMSE
\[
\mathrm{RMSE}=\Big(\mathbb{E}\big[(\widehat\tau(C)-\tau_{\text{true}}(C))^2\big]\Big)^{1/2}
\]
with mean~$\pm$~95\% normal CIs across replications.

\subsection{Real-world study: State seat-belt laws and fatalities (FARS)}
\label{subsec:realworld-fars}

\paragraph{Setting and estimand.}
We study the effect of adopting a \emph{primary seat-belt law} on motor-vehicle occupant fatality rates using a state--year panel constructed from the National Highway Traffic Safety Administration’s Fatality Analysis Reporting System (FARS) and companion NHTSA survey tables \citep{nhtsa_fars2000}. Let $C$ denote observed state--year covariates, $X\in\{0,1\}$ indicate whether a primary law is in force, $Z\in \{0,1\}$ be the observed seat-belt use, and $Y$ the \emph{occupant fatality}. Our target is the conditional front–door effect $\tau(C)$.

The FD assumptions are plausible here because (i) the causal pathway from $X$ to $Y$ operates via increased belt use; and (ii) rich $C$ may be sufficient to explain spurious paths between $X$ and $Z$; and $Z$ and $Y$; and (iii) \emph{positivity} holds empirically since belt use is neither zero nor universal in either law regime.

Our result is described in Fig.~\ref{fig:real}.

\paragraph{Data and preprocessing.}
Following the analysis script (\texttt{analyze\_fars\_2000\_fd.py}) provided in supplements, we build a balanced state--year panel and construct $(C,X,Z,Y)$ as follows.
\begin{itemize}[leftmargin=*]
\item \textbf{Treatment $X$:} indicator that a \emph{primary} seat-belt law is active in a given state and year.
\item \textbf{Mediator $Z$:} \emph{belt-use} from NHTSA surveys ($\{0,1\}$).
\item \textbf{Outcome $Y$:} \emph{occupant fatality}.
\item \textbf{Covariates $C$:} state and year fixed effects and policy-relevant controls compiled at the state--year level: coarse weather severity, and road-class mix and drivers' status.
\end{itemize}

\paragraph{Estimators and learning protocol.}
We fit the plug-in FD baseline (FD-PI), \textbf{FD-DR-Learner}, and \textbf{FD-R-Learner} exactly as in the synthetic study: all nuisances are learned with XGBoost (XGBoost; $50$ trees, depth $3$, learning rate $0.1$, subsample/colsample $0.9$, $\ell_2$ penalty $\lambda{=}1$); FD-PI/FD-DR use two-fold cross-fitting, and FD-R uses a three-way split (nuisances $\rightarrow$ $(b,g)$ via BD-R $\rightarrow$ $\gamma_g$ via pseudo-$g$), followed by swap-averaging. The final regression on pseudo-outcomes (FD-DR) or on pseudo-$g$ targets (FD-R) uses ridge OLS. To stabilize finite-sample variance we \emph{floor only denominators} that appear in inverse weights/density ratios at $0.05$; numerators are never clipped.

\endgroup

\end{document}